\documentclass[letterpaper, 10 pt, conference]{ieeeconf}
\usepackage{times}
\usepackage[pdftex]{graphicx}
\usepackage{subfigure}
\usepackage{amsmath,amssymb,amsopn,amstext,amsfonts}
\usepackage{cancel}
\usepackage[space]{cite}
\usepackage{pdfsync}
\usepackage{balance}
\usepackage{color}
\usepackage{mathtools}
\usepackage{makecell}
\usepackage{algorithm}
\usepackage{algorithmic}
\usepackage{bm}
\usepackage{diagbox}
\usepackage{float}
\usepackage{epstopdf}
\usepackage{pifont}
\usepackage{multirow}
\usepackage{url}
\usepackage{tabularx}
\usepackage{bm}
\usepackage{multirow}

\usepackage[table,xcdraw]{xcolor}
\usepackage[linkcolor=black,citecolor=black,urlcolor=black,colorlinks=true]{hyperref}

\usepackage{verbatim} 

\newtheorem{problem}{Problem}[section]
\newtheorem{lemma}{Lemma}[section]
\newtheorem{proposition}{Proposition}
\newtheorem{definition}{Definition}


\bibliographystyle{IEEEtran}

\graphicspath{{../figure/}}
\DeclareGraphicsExtensions{.png,.jpg,.eps,.pdf}
\IEEEoverridecommandlockouts
\overrideIEEEmargins

\title{\LARGE \bf Certifiably Optimal Mutual Localization with \\
Anonymous Bearing Measurements }
\author{Yingjian Wang\textsuperscript{1,2}, Xiangyong Wen\textsuperscript{1,2}, Longji Yin\textsuperscript{2}, Chao Xu\textsuperscript{1,2}, Yanjun Cao\textsuperscript{2}, Fei Gao\textsuperscript{1,2}\thanks{\textsuperscript{1}State Key Laboratory of Industrial Control Technology, Institute of Cyber-Systems and Control, Zhejiang University, Hangzhou, 310027, China.}
	\thanks{\textsuperscript{2}Huzhou Institute of Zhejiang University, Huzhou, 313000, China.}
	\thanks{E-mails:\tt\small \{yj\_wang, fgaoaa\}@zju.edu.cn.}
	}

\begin{document}

\maketitle
\thispagestyle{empty}
\pagestyle{empty}

\begin{abstract}
	Mutual localization is essential for coordination
	and cooperation in multi-robot systems. Previous works have
	tackled this problem by assuming available correspondences
	between measurements and received odometry estimations,
	which are difficult to acquire, especially for unified robot
	teams. Furthermore, most local optimization methods ask
	for initial guesses and are sensitive to their quality. In this
	paper, we present a certifiably optimal algorithm that uses
	only anonymous bearing measurements to formulate a novel
	mixed-integer quadratically constrained quadratic problem
	(MIQCQP). Then, we relax the original nonconvex problem into
	a semidefinite programming (SDP) problem and obtain a certifiably
	global optimum using with off-the-shelf solvers. As a
	result, our method can determine bearing-pose correspondences
	and furthermore recover the initial relative poses between
	robots under a certain condition. We compare
	the performance with local optimization methods on extensive
	simulations under different noise levels to show our advantage
	in global optimality and robustness. Real-world experiments
	are conducted to show the practicality and robustness.
\end{abstract}

\section{Introduction}
\label{sec:introduction}
Recently, due to the inherent advantage, multi-robot systems have
received increasing attention in many applications, such as formation control\cite{quan2021distributed}, exploration\cite{gao2021meeting}, search and rescue and surveillance. To execute each subtask correctly and complete the full task collaboratively, robots in a team are expected to be located in a common reference frame. However, this requirement is not satisfied in wild environments like underground caves where global coordinate systems are not available. Launching robots in a predetermined relative pose is another solution. However, it is obviously time-consuming and prone to failure in large-scale environments. 

To bridge this gap, self-localization using onboard sensors and relative pose recovery are irreplaceable in multi-robot systems. There are majorly two ways to estimate the initial relative transformations between robots in a team. They are map-based localization which relies on exchanging environment features, and mutual localization which depends on robot-to-robot measurements. Most research focuses on the map-based relative pose recovery method, which can be easily adapted from loop-closing modules of existing simultaneous localization and mapping (SLAM) systems. However, it requires robots to observe the same scene and send observed environment information to others, leading to degeneration in the environments with many similar or texture-less scenes.

Our study focuses on mutual localization using bearing measurements, which only utilize detected robots' 2D coordinates in the observer's image and observed robots' estimated odometry. Compared to map-based localization, it is less influenced by environments and needs less bandwidth. Despite its appeals, as we do not rely on any specialized devices, like visual tags or external sensors, data association between the visual detection and robot identifications in a team of unified robots is challenging.
\begin{figure}[t]
	\centering
	\includegraphics[width=0.5\textwidth]{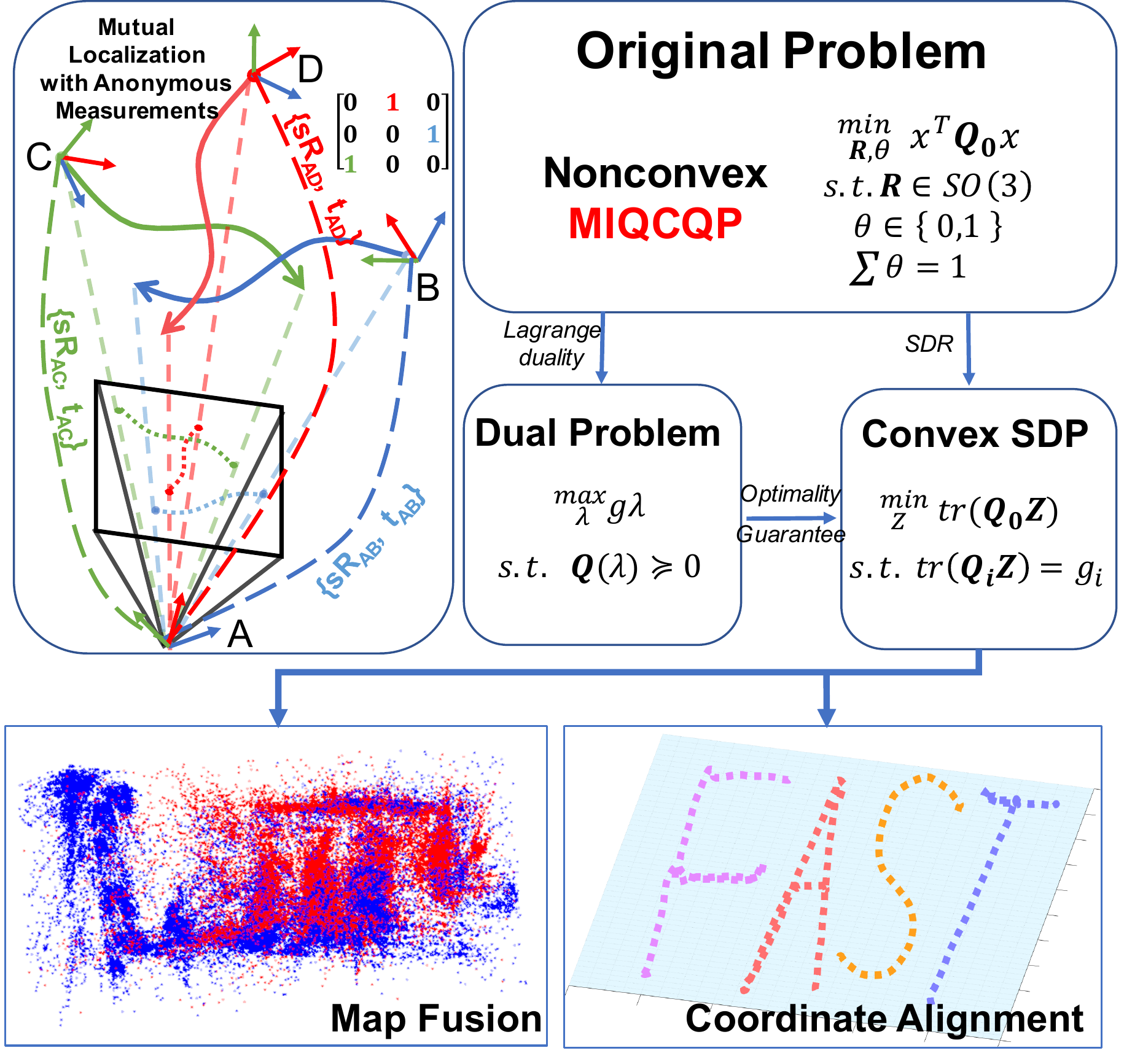}
	\caption{\label{fig:fast} Overview of our proposed method which can obtain certifiably optimal solution for mutual localization problem with anonymous bearing measurements. Our result can be used for map fusion in multi-robot monocular SLAM and coordinate alignment in multi-robot tasks.}
	\vspace{-1cm}
\end{figure}
For this problem, existing works take the similar paradigm of establishing data association firstly and then recovering the relative pose with extra sensors such as IMU. Distinctly, in our paper, we introduce binary variables representing the data association relationships and mix them with multiple SO(3) variables representing the relative poses between observer and observed robots, formulating a mixed-integer problem. Furthermore, we rewrite it as a non-convex MIQCQP problem and employ tight convex relaxation to obtain a SDP problem. Thanks to its convexity, we obtain a \emph{certifiably globally optimal solution} to our formulated problem. Moreover, we also provide a condition, under which our approach avoids local minima in noise-free cases. Complete algorithm is demonstrated in Fig.\ref{fig:fast}. Extensive experiments on synthetic real-world datasets show the robustness of our method under different levels of noise.

Our contributions in this paper are:
\begin{enumerate}
    \item We provide an innovative formulation which jointly solves data association and relative poses in a MIQCQP problem. To the best of our knowledge, there is no such work in mutual localization.
    \item We propose an algorithm for the non-convex MIQCQP problem, which adopts semidefinite relaxation (SDR) to make it convex. Furthermore, we provide a condition to guarantee the tightness of the relaxation. 
    \item We conduct sufficient simulation and real-world experiments to validate the practicality and robustness of our proposed method.
    \item We release the implementation of our method in MATLAB and C++ for the reference of our community.
\end{enumerate}

\section{Related works}
\label{sec:related_works}

\subsection{Relative Pose Estimation}
\label{mutual_localization}
There are mainly two ways to solve multi-robot relative pose estimation (RPE) problems: interloop detection based methods and mutual observation based methods. Most interloop detection based methods, including centralized \cite{riazuelo2014c2tam, schmuck2019ccm} and decentralized architectures \cite{cieslewski2018data, lajoie2020door}, firstly determine whether the robots in a team visited the same places using loop detection technique \cite{arandjelovic2016netvlad}, then conduct the relative pose recovery. However, interloop detection-based methods require significant computation and bandwidth and have poor performance in environments with many similar scenes. 

Most mutual observation-based methods employ robot-to-robot range or bearing measurements to recover relative poses. Early work \cite{zhou2006multi, martinelli2005multi, chang2011vision} take extended Kalman filter (EKF) as nonlinear estimator using prior identified range measurements. Zhou \cite{zhou2012determining} provides a set of 14 minimal analytical solutions that cover any combination of range and bearing measurements. However, their proposed algorithm has poor performance under noise because it only uses minimal measurements. Besides, all the above works assume that correspondence between measurement and estimated poses is known, which is not common in practical applications.

Cognetti \cite{cognetti20123} and Franchi \cite{franchi2013mutual} solve mutual localization problem with particle filters (PF) using anonymous measurements. Indelman \cite{indelman2014multi} and Dong \cite{dong2015distributed} formulate a multi-robot pose graph problem and utilize the expectation-maximization (EM) approach to estimate initial relative poses between robots. However, it is well known that PF and EM all require extensive computation. Nguyen \cite{nguyen2020vision} adapts the coupled probabilistic data association filter to estimate relative pose with vision sensor and IMU. In \cite{jang2021multirobot}, Jang proposes an alternating minimization algorithm to optimize relative poses in multi-robot monocular SLAM. However, these local optimization methods are sensitive to initial values and cannot work with multiple bearing measurements in one image. Compared with the above work, our proposed method solves correspondence and relative poses together without extra sensor inputs.

\subsection{Certifiably Global Optimization}
\label{optimization}

Recently, based on semidefinite relaxation and advanced optimization theory, the research community has developed certifiably optimal non-minimal solvers for many computer vision and robotics problems that are non-convex and NP-hard. In \cite{carlone2015lagrangian}, Carlone uses Lagrangian duality to verify the optimality of candidate solution of pose graph optimization (PGO). Exploiting the strong duality of PGO, SE-Sync \cite{rosen2019se} and Cartan-Sync \cite{briales2017cartan} obtain the optimal solution of PGO under acceptable noise. In \cite{yang2019quaternion}, point registration with outliers is formulated as a QCQP by binary cloning, relaxed using SDR, and finally globally optimized by adding redundant constraints. Besides, SDR is also leveraged in 3d registration \cite{briales2017convex}, camera pose estimation \cite{briales2018certifiably, zhao2020efficient}, extrinsic calibration \cite{giamou2019certifiably} and so on. All of these problems involve optimization over SO(3) or SE(3) variables and add orthogonality constraints to make the convex relaxation tight. In this paper, our solution procedure is similar to \cite{yang2019quaternion}. Differently, we keep binary variables and introduce \emph{binary constraint} and \emph{correspondence constraint} to formulate a MIQCQP problem. As far as we know, our proposed algorithm is the first method that can obtain a globally optimal solution for the mutual localization problem using anonymous measurements.

\section{Formulation of Relative Pose Estimation}
\label{sec:problem formulation}

In this section, we formulate the RPE problem with anonymous measurements as a QCQP problem. Firstly, we define a loop error for mutual localization of one observed robot case in Sec.\ref{subsec:loop constraint}. Then in Sec.\ref{subsec:mixed-binary}, we extend the error to multiple observed robots case, introduce binary variables for data association, and formulate the optimization as a mixed-integer programming problem. Finally, we marginalize distance variables, define auxiliary variables, and derive a QCQP problemin Sec.\ref{subsec:marginalization}.

\subsection{Loop Error for One Observed Robot}
\label{subsec:loop constraint}
In this subsection, we consider two robots, observer robot $A$ and observed robot $B$, moving along two 3D trajectories. Their camera coordinates frame at time $j$ are denoted by $\{A_j\}$ and $\{B_j\}$, where $j \in J$, $J$ is the timestamp collection. Robot $A$ observes feature of robot $B$ at time $j$ and gets the bearing measurement ${b^B_{j}}$ in frame \{$A_j$\}. Assuming $B$ be rigid body, the inner bias $^{B}P$ between the feature and camera on $B$ are time-invariant, i.e., $^{B}P=^{B_j}P$. Then $^{A_j}P$, the feature coordinate in frame $\{A_j\}$, can be given by

\begin{equation}
^{A_j}P = D^B_j b^B_j = R_{A_jB_j} {^{B_j}P} + t_{A_jB_j} = R_{A_jB_j} {^BP} + t_{A_jB_j},
\end{equation}
where $D^B_j$ is the distance between $A$'s camera and the observed feature. $R_{A_jB_j}$ and $t_{A_jB_j}$ denote the relative rotation and translation between $\{A_j\}$ and $\{B_j\}$. For simplicity, we set $D_j = D^B_j$ and $b_j = b^B_j$ in two robots' case. And for each time $j$, we have
\begin{equation}
\begin{aligned}
\label{equ:instance}
R_{A_1A_j} {^{A_j}P} + t_{A_1A_j} = s_{AB}R_{AB} (R_{B_1B_j} {^BP} + t_{B_1B_j}) + t_{AB},
\end{aligned}
\end{equation}
where $s_{AB}$ denotes the scale ratio between local maps of $A$ and $B$, and \{$s_{AB}R_{AB},t_{AB}$\} is the corresponding relative pose. After subtraction between Equ. (\ref{equ:instance}) of $j_1,j_2\in J$, we eliminate variable $t_{AB}$ and derive the \emph{loop error}:
\begin{equation}
\begin{aligned}
\label{equ:constraint}
e^{AB}_{j_1j_2} = R_{A_1A_{j_2}} b_{j_2} D_{j_2} - R_{A_1A_{j_1}} b_{j_1} D_{j_1} + \widehat{t}_{A_{j_1}A_{j_2}} - \\
(R_{AB} \widehat{R}_{B_{j_1}B_{j_2}} {^B\bar{P}} + s_{AB}R_{AB} \widehat{t}_{B_{j_1}B_{j_2}}),
\end{aligned}
\end{equation}
where $\widehat{t}_{X_{j_1}X_{j_2}} = t_{X_1X_{j_2}}-t_{X_1X_{j_1}}, X\in \{A,B\}$ , $\widehat{R}_{B_{j_1}B_{j_2}} = R_{B_1B_{j_2}}-R_{B_1B_{j_1}}$ and $^B\bar{P}=^BP /s_{AB}$. If $s_{AB}R_{AB}$ and $^B\bar{P}$ are recovered, $^BP$ can be determined solely. This expression is found in \cite{jang2021multirobot}. In this paper, we reformulate it in a linear expression which will be used to get a quadratic cost in Sec.\ref{subsec:mixed-binary}. Firstly we define the following variables:
\begin{equation}
\begin{aligned}
\label{equ:definition}
r_s & \doteq \text{vec}(s_{AB}R_{AB}) \in \mathbb{R}^{9 \times 1}, \\
r_p & \doteq  \text{vec}(^B\bar{P}^T \otimes R_{AB}) \in \mathbb{R}^{27 \times 1}, 
\end{aligned}
\end{equation}
where $\otimes$ is the Kronecker product, $\text{vec}(M)$ is the vectorization (applied column-wise) of matrix $M$.
Then we introduce an additional variable $y$ and constraint $y^2 = 1$ to define
\begin{equation}
\begin{aligned}
x^{AB}_{j_1j_2} &\doteq  [\ r_s^T,\ r_p^T,\ y,\ D_{j_1},\ D_{j_2}\ ]^T \in \mathbb{R}^{(9+27+1+2) \times 1}.
\end{aligned}
\end{equation}
Then the loop error of the edge $\{j_1,j_2\}$ is rewritten as 
\begin{equation}
\begin{aligned}
\label{equ:error}
e^{AB}_{j_1j_2} &=[\ \widehat{t}_{B_{j_1}B_{j_2}}^T \otimes I,\ \text{vec}(\widehat{R}_{B_{j_1}B_{j_2}})^T \otimes I, \\ 
&-\widehat{t}_{A_{j_1}A_{j_2}},\ R_{A_1A_{j_1}} b_{j_1},\ -R_{A_1A_{j_2}} b_{j_2}\ ] x^{AB}_{j_1j_2},
\end{aligned}
\end{equation} The derivation of Equ.(\ref{equ:error}) from Equ.(\ref{equ:constraint}) is given in supplementary material.

\subsection{Mutual Localization with Anonymous Measurements}
\label{subsec:mixed-binary}

In this section, we extend the above loop error to the case with $N$ observed robots.  When the amount of observed robot increases to $N\geq2$, the correct correspondence of bearing measurement sequence $b^X=\{b^X_j\}_{j\in J}$ and estimated pose trajectory $T_Y=\{R_{Y_j}, t_{Y_j}\}_{j\in J}$ is hard to provide. Here $X,Y$ are indexes of measurement sequence and estimated trajectory respectively. Recovering the correspondence of a set of measurement sequences and a set of trajectories is called anonymity recovery problem. To solve it, we introduce binary variables $\Theta = \{\theta_{XY}\}_{X,Y \in [1,N]}$ , in which the $\emph{binary constraint}$ ($\theta_{XY} = \{0,1\}$) indicates whether the $X^{\text{th}}$ bearing measurement corresponds to the $Y^{\text{th}}$ trajectory ($\theta_{XY}$ = 1) or not ($\theta_{XY}$ = 0). And the $\emph{correspondece constraints}$ can be written as
\begin{equation}
\begin{aligned}
	\sum_{X} \theta_{XY} = 1 ,\sum_{Y} \theta_{XY} = 1, \forall X,Y\in[1,N].
\end{aligned}
\end{equation}
The above constraints are to guarantee that the measurement sequences and the estimated trajectories have one-to-one correspondence.

We use the binary variables to rewrite the loop error Equ.(\ref{equ:constraint}) for the $X^{\text{th}}$ measurement as follow
\begin{equation}
\begin{aligned}
\label{equ:multi_constraint}
e^{X}_{j_1j_2} = R_{A_1A_{j_2}} b^X_{j_2} D^X_{j_2} - R_{A_1A_{j_1}} b^X_{j_1} D^X_{j_1} + \widehat{t}_{A_{j_1}A_{j_2}} - \\
\sum_{Y=1}^N \theta_{XY}(R_{AY} \widehat{R}_{Y_{j_1}Y_{j_2}} {^Y\bar{P}} + s_{AY}R_{AY} \widehat{t}_{Y_{j_1}Y_{j_2}})    .
\end{aligned}
\end{equation}

Now we convert the mixed-integer expression to a linear form. Firstly, we denote the parameters that need to be estimated for robot $Y$ as $^Y\mathbb{P} \doteq [s_{AY}, ^Y\bar{P}^T]^T$. Then we define extra variables $^Y\mathbb{P}_X \doteq \theta_{XY} {^Y\mathbb{P}}$. Furthermore, we define the following variables,
\begin{gather}
r_{XY} \doteq \text{vec}(^Y\mathbb{P}_X^T \otimes R_{AY}) \in \mathbb{R}^{36 \times 1},\\
r_{X} \doteq \text{vstack}(\{r_{XY}\}_{Y=1}^N) \in \mathbb{R}^{36 N \times 1}, \\
D_{X} \doteq \text{vstack}(\{D^X_j\}_{j\in J}) \in \mathbb{R}^{n \times 1}, \\
r \doteq \text{vstack}(\{r_X\}_{X=1}^N) \in \mathbb{R}^{36 N^2 \times 1},\\
D \doteq \text{vstack}(\{D_X\}_{X=1}^N) \in \mathbb{R}^{n N \times 1},\\
x \doteq [\ r^T,\ y,\ D^T\ ]^T \in \mathbb{R}^{(36 N^2 + 1 + n N) \times 1}.
\end{gather}
 where the notation $\text{vstack}(G)$ stacks all variable in $G$ vertically and $n$ is the number of measurements. We use variable $x$ to rewrite Euq. (\ref{equ:multi_constraint}) in linear form as $e^{X}_{j_1j_2} = c^{X}_{j_1j_2} x,X\in[1,N]$. Detailed formulation of $c^{X}_{j_1j_2}$ is given in supplementary material. Then the error of each measurement sequence is used to formulate a nonconvex least-square problem
 \begin{figure}
 	\centering
 	\includegraphics[width=0.4\textwidth]{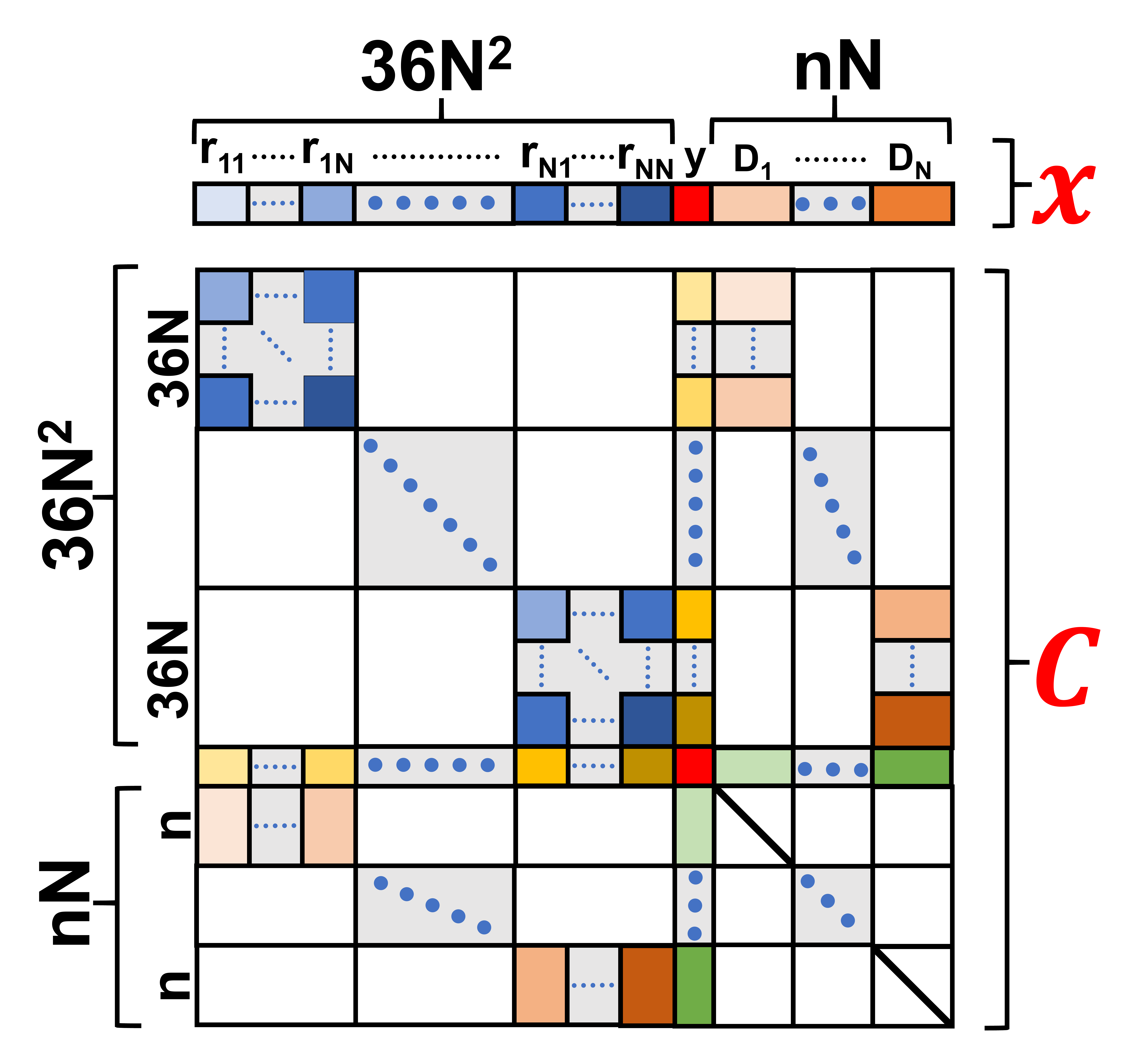}
 	\caption{\label{fig:Matrix} Structure of decision variable $x$ and cost matrix $\text{C}$ in Problem \ref{pro:origin}.}
 	\vspace{-0.7cm}
 \end{figure}
\begin{problem}[Original Problem]
\label{pro:origin}
\begin{equation}
\begin{aligned}
x^* =&\underset{x}{\arg\min}\ \sum_{X\in[1,N] \atop \{j_1,j_2\}\in J}(e^{X}_{j_1j_2})^T w^X_{j_1j_2} e^{X}_{j_1j_2}\\
=&\underset{x}{\arg\min}\ \sum_{X\in[1,N] \atop \{j_1,j_2\}\in J}x^T (c^{X}_{j_1j_2})^T w^X_{j_1j_2} c^{X}_{j_1j_2} x\\
=&\underset{x}{\arg\min}\ x^T \underbrace{(\sum_{X\in[1,N] \atop \{j_1,j_2\}\in J} (c^{X}_{j_1j_2})^T w^X_{j_1j_2} c^{X}_{j_1j_2})}_{:=C} x\\
s.t.&  D^X_j >0,\ s_{AY} > 0,\ R_{AY} \in SO(3),\ \\
&\sum_{X} \theta_{XY} = 1 ,\sum_{Y} \theta_{XY} = 1, \theta_{XY} \in \{0,1\},
\end{aligned}
\end{equation}
\end{problem}
where $w^X_{j_1j_2}$ is the measurement confidence parameter. The structure of $x$ and $C$ is shown in Fig.\ref{fig:Matrix}. Note that $C$ is a Gram matrix, so it is positive semidefinite and symmetric.

\subsection{Marginalization and Auxiliary Variables}
\label{subsec:marginalization}

In this subsection, we following the procedures in \cite{giamou2019certifiably} to marginalize the distance variables using Schur Complement. We write cost matrix $C$ as
\begin{gather}
C = \begin{bmatrix} C_{\bar{\mathcal{D}},\bar{\mathcal{D}}} & C_{\bar{\mathcal{D}},\mathcal{D}} \\
C_{\mathcal{D},\bar{\mathcal{D}}} & C_{\mathcal{D},\mathcal{D}} \end{bmatrix},
\end{gather}
where the subindex $\mathcal{D}$ stands for the set of indexes corresponding to the distance variables or not (subindex $\bar{\mathcal{D}}$). Then we eliminate distance variables $D$  and obtain
\begin{problem}[Marginalized Problem]
\label{pro:marg}
\begin{gather}
z^* =\underset{z}{\arg\min}\ z^T \bar{C} z \notag \\
s.t. \quad s_{AX} > 0, R_{AY} \in SO(3),\ \\
\sum_{X} \theta_{XY} = 1 ,\sum_{Y} \theta_{XY} = 1, \theta_{XY} \in \{0,1\}, \notag
\end{gather}
\end{problem}
where $
z = [r^T,\ y]^T$ and $\bar{C} = C / C_{\mathcal{D},\mathcal{D}} = C_{\bar{\mathcal{D}},\bar{\mathcal{D}}} - C_{\bar{\mathcal{D}},\mathcal{D}}C_{\mathcal{D},\mathcal{D}}^{-1}C_{\mathcal{D},\bar{\mathcal{D}}}$. Note that after marginalization, the number of involved variables is solely related to $N$. In contrast, exiting local optimization methods' computation is not only related to $N$ but also the number of measurements.

In our formulation, for each variable $r_{XY} = \text{vec}(\theta_{XY} {^Y\mathbb{P}} \otimes R_{AY})$, which involves $\theta_{XY}$ and $^Y\mathbb{P} = [\ s_{AY}, ^Y\bar{P}^T\ ]^T$, we generalize the $SO(3)$ constraints in \cite{briales2018certifiably} for our formulation as follow 
\begin{gather}
(\mu R_{AY})^T (\mu R_{AY}) = \mu^2 I, \label{e1}\\
(\mu R_{AY}) (\mu R_{AY})^T = \mu^2 I, \label{redundant1}\\
(\mu R_{AY})^{(i)} \times (\mu R_{AY})^{(j)} =\mu (\mu R_{AY})^{(k)} \label{redundant2}, \\
\ \ \ \ \ \ \ \ \ \ \ \ \ \ \ \ \ \forall (i,j,k) = {(1,2,3),(2,3,1),(3,1,2)}. \notag
\end{gather}
The variable $\mu$ is of the form $\mu = \theta_{XY} h_Y$, where $h_Y$ could be the term $s_{AY}, ^Y\mathbb{P}^{(1)}, ^Y\mathbb{P}^{(2)} \text{or}\ ^Y\mathbb{P}^{(3)}$. In actual, constraints (\ref{redundant1}) and (\ref{redundant2}) are redundant, and we will study the effectiveness of adding them in Sec. \ref{sec:experiments}.

However, it is still intractable to directly optimize the current problem. Since there is no direct variable corresponding to $\theta_{XY} h_Y$ in decision variable $z$, the above constraints can not be explicitly formulated into quadratic constraints in term of $z$. Similarly, since $z$ neither includes $\theta_{XY}$, the binary constraint $\theta_{XY} \in \{0,1\}$, which can be written as $\theta_{XY}^2 - \theta_{XY} = 0$, and the correspondence constraints all can not be constructed with decision variable $z$.

To address above issues, the key of next step is to introduce auxiliary variables, although they are not involved in cost function directly. According to the above analysis, we need to add auxiliary variables to represent $\theta_{XY} h_Y$ and $\theta_{XY}$. Besides, it is necessary to link the auxiliary variables for $\theta_{XY} h_Y$ and $\theta_{XY}$ to actual decision variable $z$ by adding variables representing $h_Y$ and $\text{vec}(h_Y R_{AY})$ and equality relationship constraints
\begin{gather}
\underline{\theta_{XY}h_Y}=\underline{\theta_{XY}}\ \underline{h_{Y}}, \\
\underline{\text{vec}(\theta_{XY}h_Y R_{AY})} = \underline{\theta_{XY}}\ \underline{ \text{vec}(h_Y R_{AY})}.
\end{gather}
where the underlines denote independent variables.

Summarize all necessary auxiliary variables as follows
\begin{enumerate}
	\item \textbf{Lifted Rotation Variable: $\ell$}
	\begin{gather}
	\ell_Y \doteq vec(^Y\mathbb{P}^T \otimes R_{AY}) \in \mathbb{R}^{36 \times 1},\\
	\ell \doteq \text{vstack}(\{\ell_Y\}_{Y=1}^N) \in \mathbb{R}^{36N \times 1}.
	\end{gather}
	\item \textbf{Binary Variable: $\varphi_\theta$}
	\begin{gather}
	\varphi_\theta^X \doteq \text{vstack}(\{\theta_{XY}\}_{Y=1}^N) \in \mathbb{R}^{N \times 1} ,\\
	\varphi_\theta \doteq \text{vstack}(\{\varphi_\theta^X\}_{X=1}^N) \in \mathbb{R}^{N^2 \times 1}.
	\end{gather}
	\item \textbf{Scale Ratio and Inner Bias Variable: $\varphi_h$}
	\begin{gather}
	\varphi_h \doteq \text{vstack}(\{^Y\mathbb{P}\}_{X=1}^N) \in \mathbb{R}^{4N \times 1}.
	\end{gather}
	\item \textbf{Lifted Scale Ratio and Inner Bias Variable: $\varphi_\mu$}
	\begin{gather}
	\varphi_\mu^X \doteq \text{vstack}(\{^Y\mathbb{P}_X\}_{Y=1}^N) \in \mathbb{R}^{4N \times 1}, \\
	\varphi_\mu \doteq \text{vstack}(\{\varphi_\mu^X\}_{X=1}^N) \in \mathbb{R}^{4N^2 \times 1}.
	\end{gather}
\end{enumerate}

Now we define the final decision variable 
$$
\bar{z} \doteq [z^T, \ell^T, \varphi_\theta^T, \varphi_h^T, \varphi_\mu^T]^T,
$$
and use it to formulate all constraints in quadratic terms $\bar{z}^T Q_i \bar{z} = g_i, i \in [1,m]$, where $m$ is the number of constraints. For detailed derivation of $Q_i$, we refer readers to supplementary material. Now we obtain
\begin{problem}[QCQP Probem]
\label{pro:qcqp}
\begin{gather}
f* =\underset{\bar{z}}{\min}\ \bar{z}^T Q_0 \bar{z} \notag \\
s.t. \quad \bar{z}^T Q_i \bar{z} = g_i, i=1,..,m,
\end{gather}
where $Q_0 = \begin{bmatrix} \bar{C} & 0_{d_z\times d_a} \\
0_{d_a\times d_z} & 0_{d_a\times d_a}\end{bmatrix}$. $d_z$ and $d_a$ are dimensions of $z$ and auxiliary variables respectively. 
\end{problem}

However, the formulated non-convex QCQP is still nontrivial to solve. In next section, we provide a complete algorithm using SDR to get the global optimal solution of Problem \ref{pro:qcqp}.

\section{Certifiably Global Optimization by Semidefinite Relaxation}
\label{sec:optimization problem}
In this section, we will firstly apply semidefinite relaxation to Problem \ref{pro:qcqp} in Sec.\ref{subsec:sdp}. Then we recover data correspondence and relative poses from the solution of the SDP problem in Sec.\ref{subsec:recovery}. Lastly, we provide a condition under which the zero-duality-gap and one-rank-solution can be guaranteed in noise-free cases in Sec.\ref{subsec:tightness}. 

\subsection{Semidefinite Relaxation and Dual Problem}
\label{subsec:sdp}
As stated above, Problem \ref{pro:qcqp} is non-convex. Fortunately, it can be relaxed to a convex SDP, known as Shor's relaxation. By introducing matrix variable $Z \doteq \bar{z}\bar{z}^T$, we have
\begin{gather}
\label{equ:trace}
\bar{z}^TQ_i\bar{z} = \text{tr}\,(\bar{z}^TQ_i\bar{z}) = \text{tr}\,(Q_i\bar{z}\bar{z}^T) = \text{tr}\,(Q_iZ),
\end{gather}
where $\text{tr}(M)$ is the trace of matrix $M$. Together with Equ.(\ref{equ:trace}) and dropping the constraint of $\text{rank}\,(Z)=1$, we obtain the following problem.
\begin{problem}[Primal SDP]
\label{pro:primal sdp}
\begin{gather}
f^*_{\text{primal}} =\underset{Z}{\min}\ tr(Q_0 Z)  \notag\\
s.t.  Z \succeq 0, tr(Q_iZ) = g_i, i=1,...,m, 
\end{gather}
\end{problem}
which is convex and can be solved by off-shelf solvers using primal-dual interior point method. Its dual problem is
\begin{problem}[Dual SDP]
\label{pro:dual sdp}
\begin{gather}
f^*_{\text{dual}} =\underset{\lambda}{\max}\ g^T\lambda \notag \\
s.t. Q(\lambda) = Q_0 - \sum_{i}\lambda_iQ_i \succeq 0, i=1,...,m,
\end{gather}
\end{problem}
where $g=[g_1,...,g_m]^T$, $\lambda=[\lambda_1,...,\lambda_m]^T$. 

Once $Z^*$, the solution of Problem \ref{pro:primal sdp}, is obtained, we denote the part of $Z^*$ that corresponds to variable $z$ as $\mathcal{Z}^* \doteq Z^*_{[1:36N^2, 1:36N^2]}$. Moreover, if zero-duality-gap  ($f^*_{\text{primal}} = f^*_{\text{dual}}$) and one-rank-solution ($\text{rank}\, (\mathcal{Z}^*)=1$) hold, we can obtain the global optimal solution $z^*$ of Problem \ref{pro:marg} as described in Sec. \ref{subsec:recovery}. Actually, both the above conditions are satisfied in noise-free cases, which is proved in Sec. \ref{subsec:tightness}.

\subsection{Recovery from the tight SDP solution}
\label{subsec:recovery}
Given $\mathcal{Z}^*$, we need to recover the optimal correspondences and relative poses. According to the one-rank-solution ($\text{rank} (\mathcal{Z}^*)=1$), we firstly deploy a rank-one decomposition to obtain $z^*\in \mathbb{R}^{36N^2 \times 1}$. Denoting $r_{XY}^* \in \mathbb{R}^{36 \times 1}$ as slices of $z^*$ corresponding to variable $r_{XY}$, we define $M_{AY}^X \doteq \theta_{XY}^{*} {^Y\mathbb{P}^{*T}} \otimes R_{AY}^* = \text{mat}(r^*_{XY}, [12, 3])$, where $\text{mat}(v,[r,c])$ means reshape the vector $v$ to one $r \times c$ matrix by col-first order. Note that $M_{AY}^X$ is either zero matrix or non-zero matrix due to the binary variable $\theta_{XY}$. So we set $\epsilon = 10^{-5}$ and  take  $||M_{AY}^{X}||_2 > \epsilon$ to indicate that the $X^{\text{th}}$ measurement corresponds to the $Y^{\text{th}}$ estimated trajectory.

Then for each $M_{AY}^X$ whose corresponding $\theta_{XY} > \epsilon$, we recover the scale ratio and relative rotation
\begin{gather}
\mathbb{S}_{AY}^* := s_{AY}^* R_{AY}^* = M^X_{AY[1:3,1:3]}, \\
s_{AY}^* = \sqrt[3]{\text{det}(\mathbb{S}_{AY}^*)}, R_{AY}^* = \mathbb{S}_{AY}^*/ s_{AY}^*, 
\end{gather}
and inner bias $^YP^*$ similarly.

Recall that we have marginalized the distance variable $D$ in Sec.\ref{subsec:marginalization}, we now recover the optimal $D^*$ as 
\begin{gather}
D^*(r^*) = -C_{\mathcal{D},\mathcal{D}}^{-1}C_{\mathcal{D},\bar{\mathcal{D}}}r^*.
\end{gather}

Furthermore, the optimal relative translation $t_{AY}^*$ is recovered using $D^*$ as   follow
\begin{gather}
t_{AY}^* = \sum_{j\in J}(  t_{A_1A_j} + R_{A_1A_j}(D_j^{Y*}\ b_j^Y) - \\
s_{AY}^* R_{AY}^* (R_{Y_1Y_j} {^YP^*} + t_{Y_1Y_j})). \notag
\end{gather}

\subsection{Tightness of Semidefinite Relaxation}
\label{subsec:tightness}
In this subsection, we aim to prove that there are zero-duality-gap and one-rank-solution in noise-free cases. Firstly, we introduce a lemma and a corank-one condition.

\begin{lemma}
\label{lemma:psd}
If $C \in \mathbb{R}^{n\times n}$ be positive semidefinite and $x^TCx=0$ for a vector $x$, then $Cx=0$.
\end{lemma}

\begin{definition}
	For Problem \ref{pro:origin}, the \emph{corank-one condition} holds if  the number of \emph{independent} measurements $n$ and the number of observed robots $N$, satisfy that $n \geq 18N+2$, where \emph{independent} measurements mean that $\{c^{X}_{j_1j_2}\}$ are linearly independent vectors.
\end{definition}
Based in this condition, we have 
\begin{lemma}
\label{lemma:corank} Assume that the corank-one condition holds, then the cost matrix $C$ is semidefinite and has corank one in noise-free cases. Furthermore, after Schur Compliment, $\bar{C}$ is also semidefinite and has corank one.  
\end{lemma}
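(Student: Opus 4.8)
\emph{Proof plan.} The plan is to handle positive semidefiniteness immediately, reduce the corank claim to a description of $\ker C$, and then transport everything through the Schur complement. First, $C=\sum_{X,\{j_1,j_2\}}(c^{X}_{j_1j_2})^{T}w^{X}_{j_1j_2}c^{X}_{j_1j_2}$ is a sum of positive semidefinite terms (each $w^{X}_{j_1j_2}\succ 0$), so $C\succeq 0$ with no work; this was already noted below Problem~\ref{pro:origin}. Next, since the weights are positive definite, $x^{T}Cx=0$ is equivalent to $c^{X}_{j_1j_2}x=0$ for every sequence $X$ and every edge $\{j_1,j_2\}$, and combining this with Lemma~\ref{lemma:psd} identifies $\ker C$ with the solution space of the homogeneous linear system that stacks all loop-error rows. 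For the lower bound $\text{corank}(C)\ge 1$ I would exhibit the noise-free ground-truth vector $x^{\star}$ — assembled from the true relative rotations and scale ratios, the true inner biases, the correspondence-induced block pattern of $r$, and the true distances — which makes every loop error vanish and is nonzero (e.g. $|y^{\star}|=1$), so $\operatorname{span}(x^{\star})\subseteq\ker C$.

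The crux is the reverse inclusion, and here I would use two structural properties of $c^{X}_{j_1j_2}$. First, each loop error is a telescoping difference of the per-timestamp residual, hence obeys the cocycle identity $e^{X}_{j_1j_3}=e^{X}_{j_1j_2}+e^{X}_{j_2j_3}$; therefore, for a fixed sequence $X$, only a spanning tree of the $n=|J|$ timestamps carries independent information, i.e. at most $3(n-1)$ scalar equations. Second, $c^{X}_{j_1j_2}$ couples only the block $(r_X,\,y,\,D_X)$, with $y$ the lone variable shared across sequences, and with generically nonzero coefficients on each $r_{XY}$ (built from robot $Y$'s relative odometry), on $D^{X}_{j_1}$ and $D^{X}_{j_2}$ (the bearing directions), and on $y$ (the observer's relative odometry). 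Fixing $y$ turns the per-sequence part into $[\,A_X\;\vert\;E_X\,][r_X;D_X]=-b_X y$, a $3(n-1)\times(36N+n)$ system; the inequality $n\ge 18N+2$ in the corank-one condition is exactly $3(n-1)\ge 36N+n$, and the independence clause of that condition is precisely what forces $[\,A_X\;\vert\;E_X\,]$ to have full column rank, so $(r_X,D_X)$ is pinned down by $y$ alone (consistency being inherited from $x^{\star}$). Hence $\ker C$ is one-dimensional, $\ker C=\operatorname{span}(x^{\star})$, and $\text{corank}(C)=1$.

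Finally I would push this through the Schur complement. $C_{\mathcal{D},\mathcal{D}}$ is a principal submatrix of $C\succeq 0$, hence positive semidefinite, and it is nonsingular: any nonzero vector in its kernel would extend by zeros to an element of $\ker C=\operatorname{span}(x^{\star})$ supported only on the distance block, contradicting $|y^{\star}|=1$. With $C_{\mathcal{D},\mathcal{D}}\succ 0$, the block-PSD characterization gives $\bar{C}=C/C_{\mathcal{D},\mathcal{D}}\succeq 0$, and the generalized Schur (Haynsworth) rank identity gives $\text{rank}(C)=\text{rank}(C_{\mathcal{D},\mathcal{D}})+\text{rank}(\bar{C})=nN+\text{rank}(\bar{C})$. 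Since $\text{rank}(C)=(36N^{2}+1+nN)-1$, this forces $\text{rank}(\bar{C})=36N^{2}$, and because $\bar{C}$ has size $36N^{2}+1$ we conclude $\text{corank}(\bar{C})=1$; equivalently $\ker\bar{C}=\operatorname{span}(z^{\star})$ where $x^{\star}=[\,z^{\star T},\,D^{\star T}\,]^{T}$.

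I expect the full-column-rank step of the second paragraph to be the main obstacle: it needs the explicit block form of $c^{X}_{j_1j_2}$ (supplementary material) to confirm that the telescoping cocycle is the only structural dependency among loop-error rows and that, for data in general position, $[\,A_X\;\vert\;E_X\,]$ attains rank $36N+n$ as soon as $n\ge 18N+2$ — in effect excluding degenerate observer/target odometry and bearing configurations. The remaining ingredients — Lemma~\ref{lemma:psd}, basic facts on principal submatrices, and the Schur complement rank/PSD identities — are routine.
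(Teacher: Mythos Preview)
The paper does not prove this lemma in the main text; it defers both Lemma~\ref{lemma:psd} and Lemma~\ref{lemma:corank} to the supplementary material, so a line-by-line comparison with the authors' own argument is not possible from the source provided. Evaluated on its own merits, your plan is sound and almost certainly tracks the intended argument: the cocycle identity you exploit is exactly the telescoping structure of the loop error in~(\ref{equ:constraint}); the block-support observation (sequence $X$ touches only $(r_X,y,D_X)$) is correct from~(\ref{equ:multi_constraint}); and your dimension count $3(n-1)\ge 36N+n\Leftrightarrow n\ge 18N+2$ reproduces the threshold in Definition~1 on the nose, which is strong evidence that this is the mechanism the authors had in mind. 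The Schur-complement step is clean and correct: you use Lemma~\ref{lemma:psd} to upgrade $v^{T}C_{\mathcal D,\mathcal D}v=0$ to membership in $\ker C$, rule that out via $|y^\star|=1$, and then Haynsworth's rank additivity finishes.

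One point deserves slightly more care. The paper's ``independence'' hypothesis is phrased as linear independence of the row blocks $c^{X}_{j_1j_2}$, i.e.\ a full \emph{row}-rank statement about the stacked loop-error matrix, whereas what your kernel argument actually needs is full \emph{column} rank of $[A_X\,|\,E_X]$ once $y$ is fixed. When $3(n-1)> 36N+1+n$ (i.e.\ $n>18N+2$), having $3(n-1)$ independent rows in a matrix with only $36N+1+n$ columns is impossible, so the paper's phrasing cannot be taken literally there; it is best read as the genericity assumption on odometry and bearings that you already invoke in your closing paragraph. Under that reading your argument goes through and the per-sequence system pins $(r_X,D_X)$ to a unique value for each $y$, yielding $\operatorname{corank}(C)=1$. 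In short, there is no gap in your plan beyond the one you already flag.
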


The detailed proof of above two lemmas can be seen in supplementary material. Then we apply Lemma 2.1 in \cite{cifuentes2021local} to our problem and introduce the following proposition.
\begin{proposition}
If bearing measurements are noise-free, their is zero-duality-gap between Problem \ref{pro:qcqp} and Problem \ref{pro:dual sdp}. Furthermore, once the corank-one condition is satisfied and given the solution $Z^*$ of Problem \ref{pro:primal sdp}, we have $\text{rank} (\mathcal{Z}^*) = 1$, and its rank-one decomposition $z^*$ is the global optimal minimum of Problem \ref{pro:qcqp}.
\end{proposition}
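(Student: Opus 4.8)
The plan is to establish the two conclusions---zero-duality-gap and rank-one recovery---by invoking Lemma~2.1 of \cite{cifuentes2021local}, which gives sufficient conditions for exactness of Shor's relaxation in terms of (i) the existence of a primal-feasible point whose lifting $\bar z^\star (\bar z^\star)^T$ is optimal, and (ii) a \emph{dual certificate} $\lambda^\star$ such that the \emph{corank} of $Q(\lambda^\star)$ equals one. First I would observe that in the noise-free case the ground-truth configuration---the true scale ratios, relative rotations, inner biases, distances, and the true permutation $\Theta$---yields loop errors $e^{X}_{j_1 j_2}=0$ for every edge, hence a feasible $\bar z^\star$ with cost $\bar z^{\star T} Q_0 \bar z^\star = z^{\star T}\bar C z^\star = 0$. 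Since $C$ (and therefore $\bar C$, by the Schur-complement identity and Lemma~\ref{lemma:corank}) is positive semidefinite, the QCQP objective is bounded below by $0$, so $\bar z^\star$ is globally optimal for Problem~\ref{pro:qcqp} and any feasible lifting attains the primal SDP optimum $f^*_{\text{primal}}=0$.

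Next I would construct the dual certificate. The natural candidate is $\lambda^\star$ coming from the KKT/Lagrangian stationarity of the QCQP at $\bar z^\star$: one needs $Q(\lambda^\star)=Q_0-\sum_i\lambda_i^\star Q_i \succeq 0$ together with $Q(\lambda^\star)\bar z^\star = 0$, which by complementary slackness forces $g^T\lambda^\star = \bar z^{\star T}Q_0\bar z^\star = 0 = f^*_{\text{primal}}$, giving zero-duality-gap. The key structural point is that $Q_0$ is block-diagonal with the $z$-block equal to $\bar C$ and the auxiliary block zero; by Lemma~\ref{lemma:corank}, $\bar C$ has corank one in the noise-free case under the corank-one condition, so $\bar C z^\star = 0$ by Lemma~\ref{lemma:psd}. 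The constraint multipliers associated with the $SO(3)$, binary, and correspondence constraints can then be chosen so that $Q(\lambda^\star)$ inherits exactly one-dimensional kernel, namely $\mathrm{span}(\bar z^\star)$: on the $z$-block the kernel is already one-dimensional from $\bar C$, and on the auxiliary directions the redundant orthogonality/cross-product constraints \eqref{e1}--\eqref{redundant2} together with the binary and linking equalities are designed to pin down those coordinates, leaving no extra kernel directions. This is precisely the role of the ``redundant'' constraints flagged after \eqref{redundant2}.

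Having $\mathrm{corank}\,Q(\lambda^\star)=1$, Lemma~2.1 of \cite{cifuentes2021local} yields that every optimal $Z^*$ of Problem~\ref{pro:primal sdp} satisfies $\mathrm{range}(Z^*)\subseteq \ker Q(\lambda^\star)=\mathrm{span}(\bar z^\star)$, hence $Z^*=\bar z^\star\bar z^{\star T}$ is rank one; restricting to the $z$-coordinates gives $\mathrm{rank}(\mathcal Z^*)=1$, and its rank-one factor is (a scalar multiple of) $z^\star$, which we have shown is globally optimal. Extracting correspondences and poses then proceeds exactly as in Sec.~\ref{subsec:recovery}.

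The main obstacle I anticipate is verifying that $Q(\lambda^\star)$ has corank \emph{exactly} one rather than corank $\ge 1$: the block structure of $Q_0$ means the auxiliary block of $Q_0$ is zero, so the entire positive-semidefiniteness and the elimination of spurious kernel directions in $Q(\lambda^\star)$ must be carried entirely by the constraint matrices $Q_i$. One must argue that the constructed multipliers make the auxiliary-block and the cross-block of $Q(\lambda^\star)$ strictly ``curve away'' from every direction except $\bar z^\star$---equivalently, that the linear map sending a perturbation of the auxiliary variables into the constraint residuals has full rank at $\bar z^\star$. Establishing this genericity-type statement (which is where the linear-independence hypothesis in the corank-one condition, $n\ge 18N+2$, and the redundant constraints enter) is the delicate step; the rest is bookkeeping with Schur complements and the cited lemma.
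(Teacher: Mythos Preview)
Your overall architecture is sound, but you are working much harder than necessary, and the ``main obstacle'' you identify is one the paper simply sidesteps. The paper does not attempt to build a dual certificate with $\mathrm{corank}\,Q(\lambda^\star)=1$; instead it takes the \emph{trivial} certificate $\tilde\lambda=0$. Then $Q(\tilde\lambda)=Q_0=\begin{bmatrix}\bar C & 0\\ 0 & 0\end{bmatrix}$, which is automatically positive semidefinite because $\bar C\succeq 0$ (Lemma~\ref{lemma:corank}), and $Q_0\tilde{\bar z}=0$ follows from $\tilde{\bar z}^T Q_0\tilde{\bar z}=0$ together with Lemma~\ref{lemma:psd}. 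These are exactly the primal feasibility, dual feasibility, and stationarity hypotheses of Lemma~2.1 in \cite{cifuentes2021local}, so the zero-duality-gap is immediate---no multipliers need to be chosen.

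The reason this suffices for the rank statement is that the proposition only asserts $\mathrm{rank}(\mathcal Z^*)=1$, i.e.\ rank one on the $z$-block, \emph{not} $\mathrm{rank}(Z^*)=1$. With $\tilde\lambda=0$, complementary slackness gives $\mathrm{tr}(Q_0 Z^*)=0$, and the block structure of $Q_0$ reduces this to $\mathrm{tr}(\bar C\,\mathcal Z^*)=0$. Since both $\bar C$ and $\mathcal Z^*$ are PSD, this forces $\mathrm{range}(\mathcal Z^*)\subseteq\ker\bar C$, hence $\mathrm{rank}(\mathcal Z^*)\le\mathrm{corank}(\bar C)=1$ under the corank-one condition. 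So the entire burden is carried by $\bar C$ on the $z$-block; the auxiliary block of $Q(\tilde\lambda)$ is the zero matrix and its large kernel is irrelevant because the proposition makes no claim about those coordinates of $Z^*$.

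In short: your plan to engineer multipliers so that the constraint matrices $Q_i$ ``curve away'' the auxiliary directions is unnecessary for this noise-free result, and the redundant constraints \eqref{redundant1}--\eqref{redundant2} play no role in the proof (their utility is purely empirical, for noisy instances, as Sec.~\ref{sec:experiments} shows). Once you notice that only $\mathcal Z^*$ must be rank one, the choice $\lambda=0$ and Lemma~\ref{lemma:corank} do all the work.
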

\begin{proof}
Let $\tilde{\bar{z}} = [\tilde{z}^T, \tilde{\ell}^T, \tilde{\varphi_\theta}^T, \tilde{\varphi_p}^T, \tilde{\varphi_\mu}^T]^T $be a feasible point in Problem \ref{pro:qcqp} where $\tilde{z}, \tilde{\ell}, \tilde{\varphi_\theta}, \tilde{\varphi_p}, \tilde{\varphi_\mu}$ are all ground truth. Let $\tilde{\lambda}=0$ be a feasible point in Problem \ref{pro:dual sdp}. Then the zero-duality-gap is guaranteed since the below three conditions needed in Lemma 2.1 in \cite{cifuentes2021local} are satisfied:
(i) Primal feasibility. In noise-free cases, the ground truth certainly satisfy constraints in Problem \ref{pro:qcqp}. 
(ii) Dual feasibility. $Q(\tilde{\lambda}) = Q_0 - \sum_{i=1}^m\tilde{\lambda}_iQ_i = Q_0 = \begin{bmatrix} \bar{C} & 0_{d_z \times d_a} \\
0_{d_a \times d_z} & 0_{d_a \times d_a}\end{bmatrix} \succeq 0$. 
(iii) Lagrangian multiplier. Since $\tilde{\bar{z}}$ is ground truth,  $\tilde{\bar{z}}^TQ_0\tilde{\bar{z}}$ equals to the optimal cost in Problem \ref{pro:origin}, which equals to 0. Recall that $Q_0$ is semidefinite according to Lemma. \ref{lemma:corank}, so $Q(\tilde{\lambda}) \tilde{\bar{z}} = 0$ is obtained based on Lemma. \ref{lemma:psd}. 

Furthermore, suppose $Z^*$ is the optimal solution of Problem \ref{pro:primal sdp}. Then $ Z^* \neq 0$ since at least one $g_i \neq 0$.  By complementary
slackness, $\text{tr}(Q(\tilde{\lambda}) Z^*) = \text{tr}(\begin{bmatrix} \bar{C} \mathcal{Z}^* & 0_{d_z \times d_a} \\
0_{d_a \times d_z} & 0_{d_a \times d_a}\end{bmatrix}) = 0$, so $\text{tr}(\bar{C} \mathcal{Z}^*) = 0$. And since $\bar{C}$ and $\mathcal{Z}^*$ are both positive semidefinite, $\text{rank} (\bar{C}) + \text{rank} (\mathcal{Z}^*) <= N$. So, if $\text{corank} (\bar{C}) = 1$, $\text{rank} (\mathcal{Z})^* = 1$. Moreover, its rank-one decomposition $\tilde{z}$ is the unique optimum of Problem \ref{pro:qcqp}.
\end{proof}





\section{Experiments}

In this section, we firstly confirm the optimality and efficiency of our method by comparing it against the alternating minimization (AM)\cite{jang2021multirobot} and the Levenberg-Marquardt (LM) methods. 
Next, to present the robustness of our method, we compare its performances under different levels of noise. 
Then, we show the results of our method with different robot number and noise. 
Finally, we apply our algorithm in real-world, using estimated odometry from different sources. 

\label{sec:experiments}
\subsection{Experiments on Synthetic Data}
\label{subsec:benchmark}
To simulate bearing measurement, we generate random trajectories for multiple robots. An example simulated environment is shown in Fig. \ref{fig:simulation}. Robots trace circular routes around different centers over a common landscape consisting of multiple random sinusoidal functions. All trajectories have the same length. Then, for observer robot $A$ and observed robot $Y$, we use their global poses to generate noisy bearing measurement as follow
\begin{gather}
b_j^Y = R_{A_j}^{-1}(t_{Y_j} + \mathcal{N}(1,\sigma) R_{Y_j} {^YP} - t_{A_j}).
\end{gather} 
where $\mathcal{N}(1,\sigma)$ is Gaussian distribution with standard deviation $\sigma$ . Then, we take the first pose of each trajectory as the local world frame and obtain each robot's local poses, which will be shared with other robots for estimation.
 \subsubsection{Optimality and Runtime}
Given a certain initial value of relative rotation matrix $R$, both AM and LM can converge to a local minimum, with error distributions shown in Fig. \ref{fig:init}. 
In this figure, each cell denotes the $\text{L}^2$-norm error of the  estimated relative pose. 
Fig. \ref{fig:init} states, the optimization converges to local minimums if the distance between the initial values and ground truth is large. 

Then we compare our method with these two methods in optimality and efficiency for four problems: RPE without scale ratio and inner bias (RPE-only), RPE with scale ratio (RPE-S), RPE with inner bias (RPE-B), and RPE with scale ratio and inner bias (RPE-SB). 
For each problem, we conduct 1000 experiments using different measurements.
The left figure of Fig. \ref{fig:benchmark_all} shows that for all problems, our method can always obtain the optimal solution, while both AM and LM are trapped in local minimums with random initial values. 
For efficiency, since our formulation fixes the number of variables by marginalizing the distance variables $D$ (see Sec.\ref{subsec:marginalization}), its computing time is only related to the number of observed robots. 
In contrast, the number of variables in local optimization methods AM and LM increase with measurement number. 
The right figure of Fig. \ref{fig:benchmark_all}, which presents the mean runtime with 200 bearing measurements, show that our method solves all problems faster.

\begin{figure}[t]
	\centering
	\includegraphics[width=0.5\textwidth]{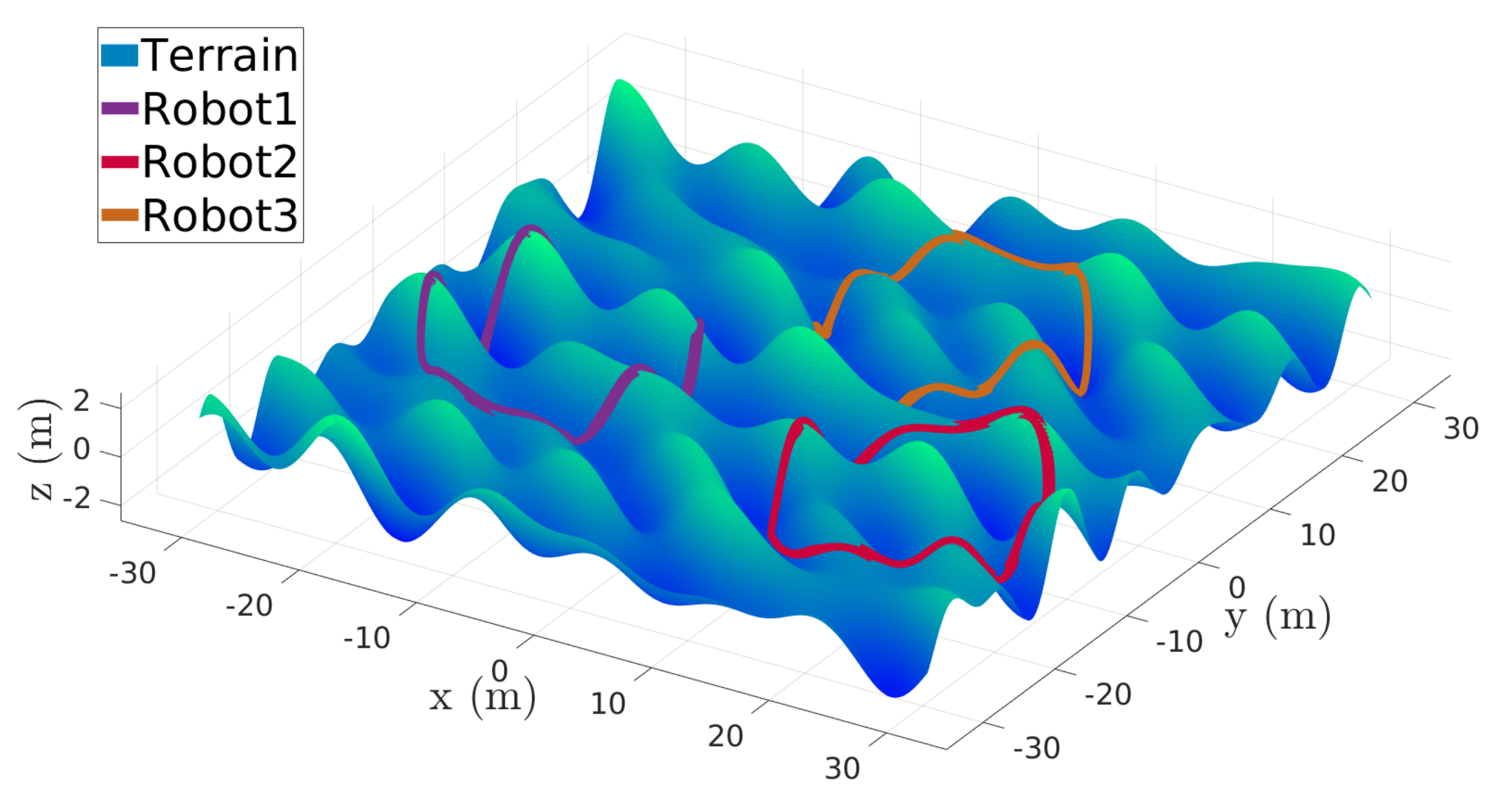}
	\caption{\label{fig:simulation} Three random trajectories in a simulated environment.}
	\vspace{-2.8cm}
\end{figure}

\begin{figure}[b]
	\centering
	\includegraphics[width=0.5\textwidth]{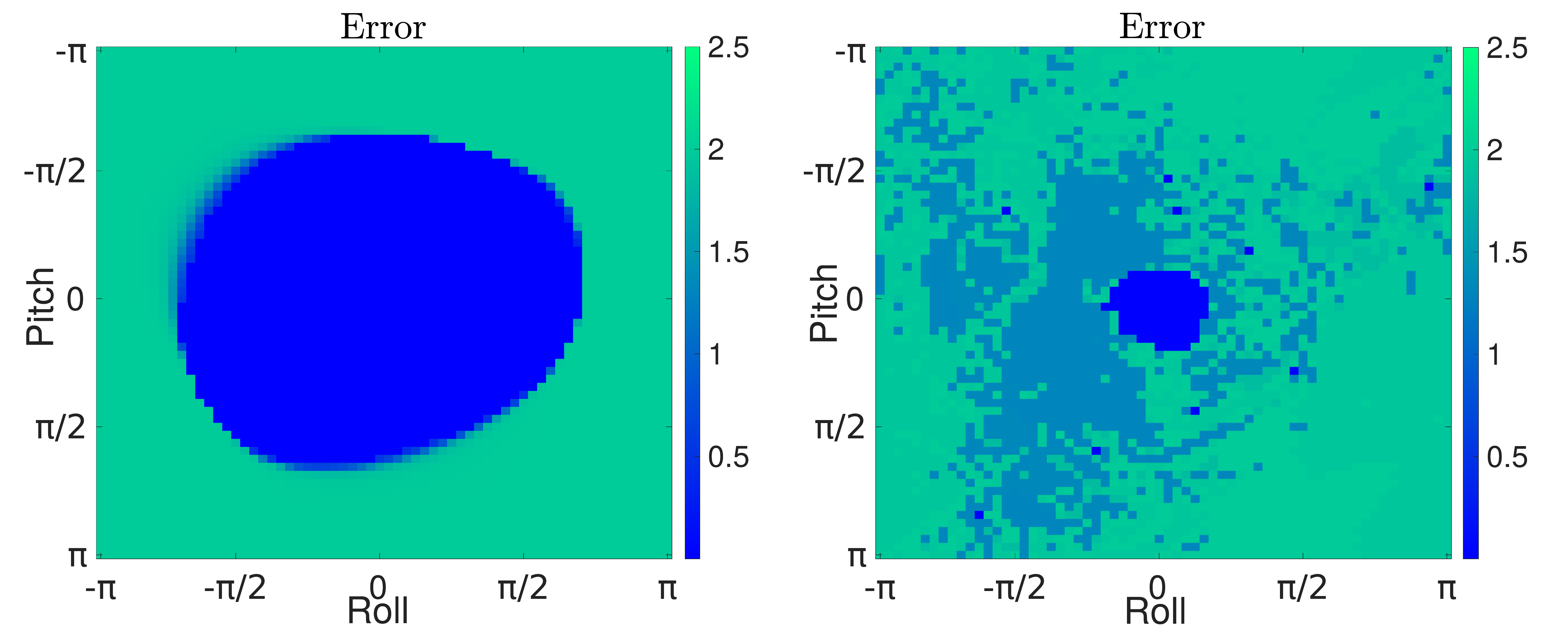}
	\caption{\label{fig:init} Error heatmaps for two local optimization methods over a uniform sampled initial value $R$, which is generated by rotating ground truth with roll (x-axis) and pitch (y-axis) from $-\pi$ and $\pi$. Green region denotes range of initial values in which these approach will drop into local minimums.}
\end{figure}

 \begin{figure}[t]
	\centering
	\includegraphics[width=0.5\textwidth]{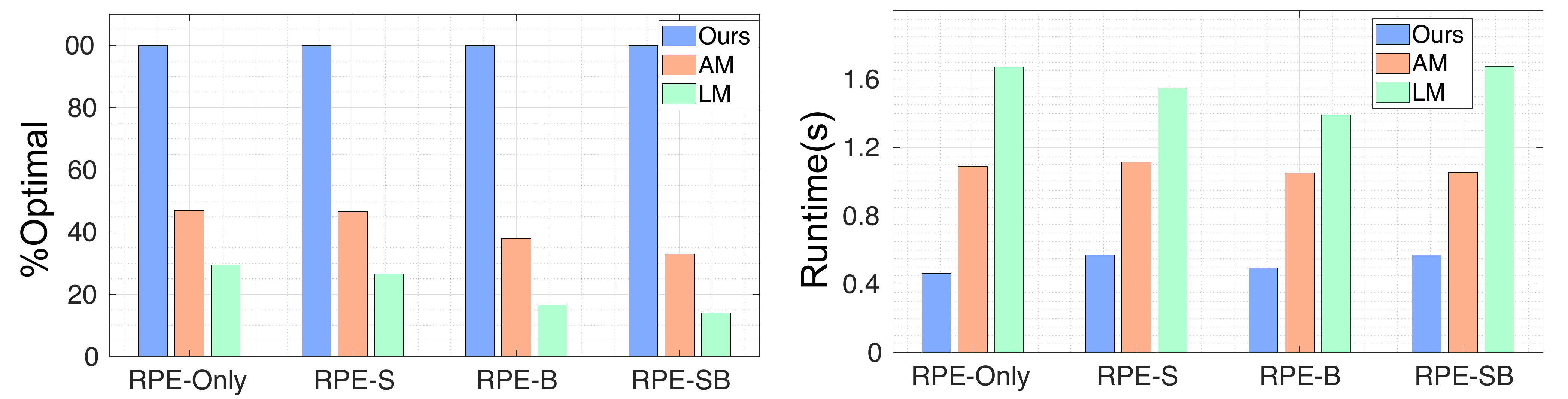}
	\caption{\label{fig:benchmark_all} Benchmark results between our method and local optimization algorithms for different problems.}
	\vspace{-1.7cm}
\end{figure}

 \begin{figure}[b]
 	\centering
 	\includegraphics[width=0.5\textwidth]{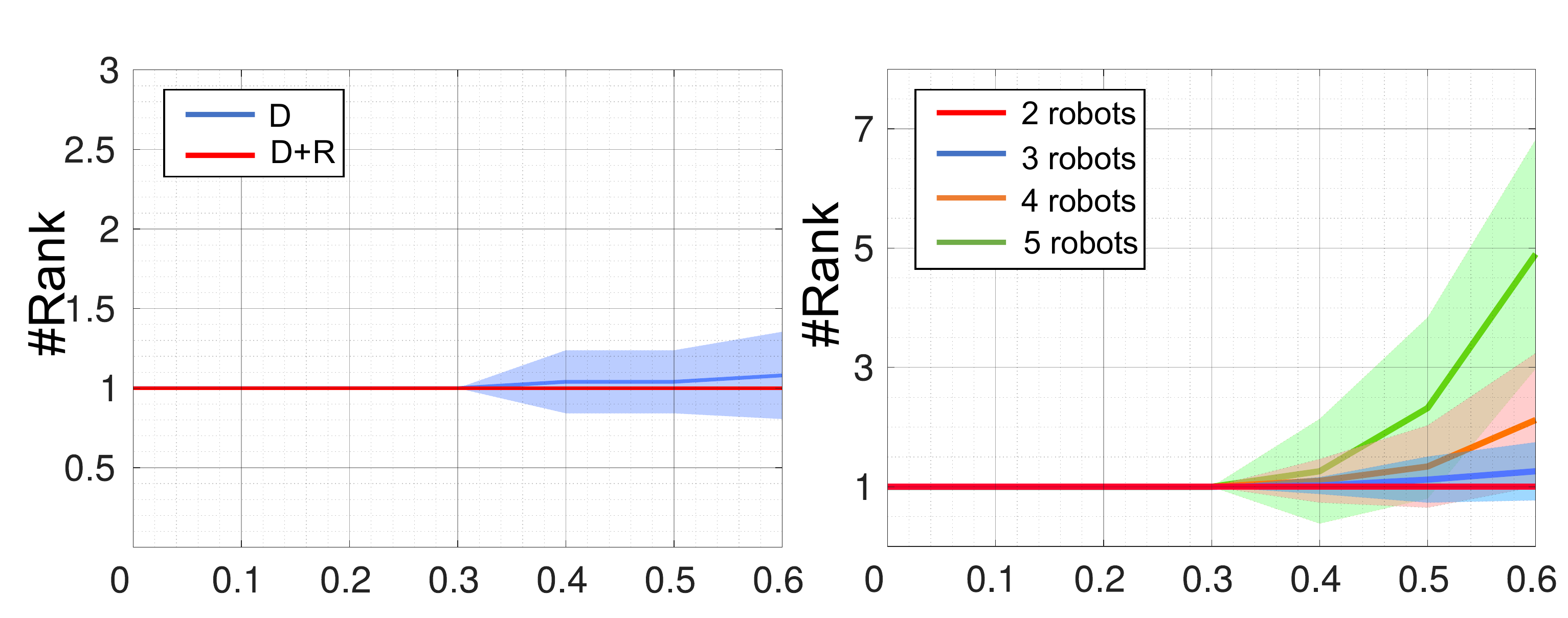}
 	\caption{\label{fig:all_rank}(Left) Comparison between method w/ and w/o redundant constraints (Right) Comparison with different number of robots. (solid line: mean; shaded area: 1-sigma standard deviation).}
 \end{figure}
 \begin{figure*}[t]
	\centering
	\includegraphics[width=1.0\textwidth]{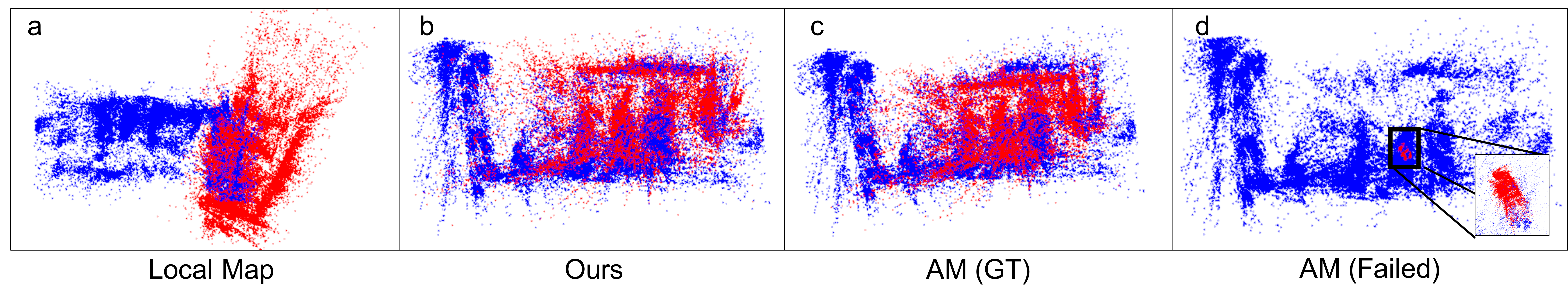}
	\caption{\label{fig:ORB} Map fusion results using feature maps from two robots, which launch at different place and observe each other when they rendezvous.}
	\vspace{-0.4cm}
\end{figure*}

 \subsubsection{Robustness}
To evaluate the robustness of our method and the effectiveness of the redundant constraints, we add different levels of noise into simulated measurements. 
We compare two versions of our method, the default version (D) and the augmented version which is added redundant rotation constraint (D+R). 
As the left plot of Fig. \ref{fig:all_rank} shows, for each noise level, the augmented version (D+R) recover an exact minimizer of the primal problem. However, the default version (D) does not obtain the one-rank solution under extreme noise ($\sigma \geq 0.4$).
 
Fig. \ref{fig:robust_benchmark} presents the performance of our method (D and D+R) and several local optimization methods. 
AM and LM utilize random rotation as the initial value, and AM (GT) and LM (GT) use the ground truth instead. 
Each figure represents 100 random trials on simulated data with different noise levels $\sigma$.
As Fig. \ref{fig:robust_benchmark} shows, our method is consistently more accurate compared to AM and LM and has comparable performance with AM (GT). Besides, we observe that under extreme noise ($\sigma$ = 0.5), our method still performs accurately.

\begin{figure}[t]
	\centering
	\includegraphics[width=0.5\textwidth]{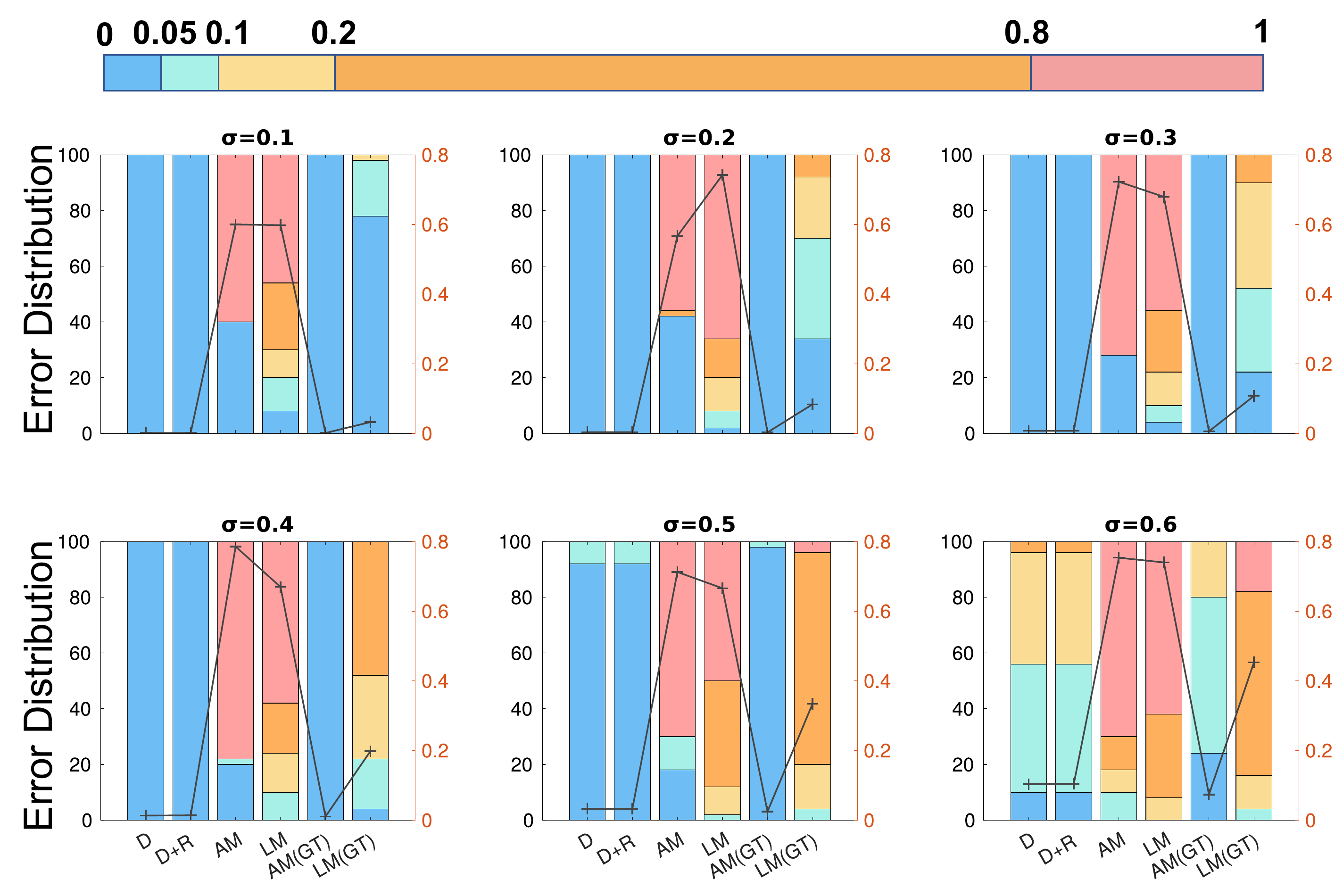}
	\caption{\label{fig:robust_benchmark} Comparison of error distribution between different methods. The top colorbar presents colors corresponding to different error range. In each subfigure, each bar denotes the percentage of error range and the black line represents the mean error.}
	\vspace{-0.7cm}
\end{figure}

\begin{figure}[b]
	\centering
	\includegraphics[width=0.5\textwidth]{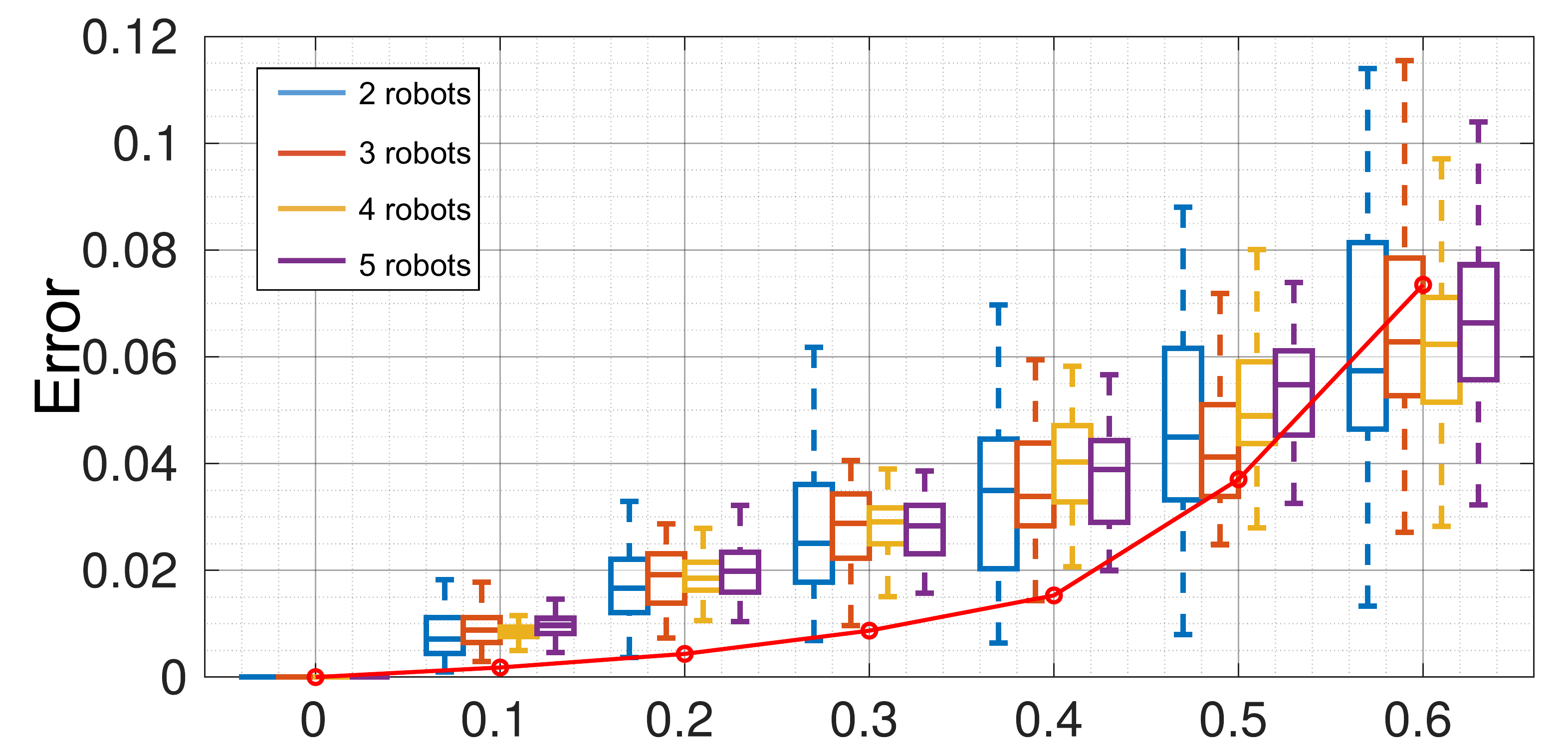}
	\caption{\label{fig:multi_error} Comparison of error distribution of our method with different number of robots under different levels of noise.}
\end{figure}

Furthermore, we conduct experiments with multiple observed robots under noise. 
The right figure of Fig.\ref{fig:all_rank} presents the trend of rank($\mathcal{Z}$) when the noise level increases. 
Although as the number of robots increases, the zero-duality-gap is easily influenced by noise, our method always obtains one-rank solution with a common noise level ($\sigma < 0.4$). 
Moreover, we present the error distribution of obtained solutions in Fig. \ref{fig:multi_error}. 
This figure states: (1) Increasing number of robots does not influence the estimation error majorly. (2) Although there is no one-rank-solution under extreme noise, the result of one-rank decomposition has comparable accuracy with AM (GT).

\subsubsection{Scalability} 
In our method, the number of variables is related to the squared number of observed robots.
Fig. \ref{fig:multi_time} shows the runtime of our method with different numbers of robots. 
According to the result, our method has an acceptable runtime in real multi-robot applications when the robot number is limited.

 \begin{figure}[b]
 	\centering
 	\includegraphics[width=0.5\textwidth]{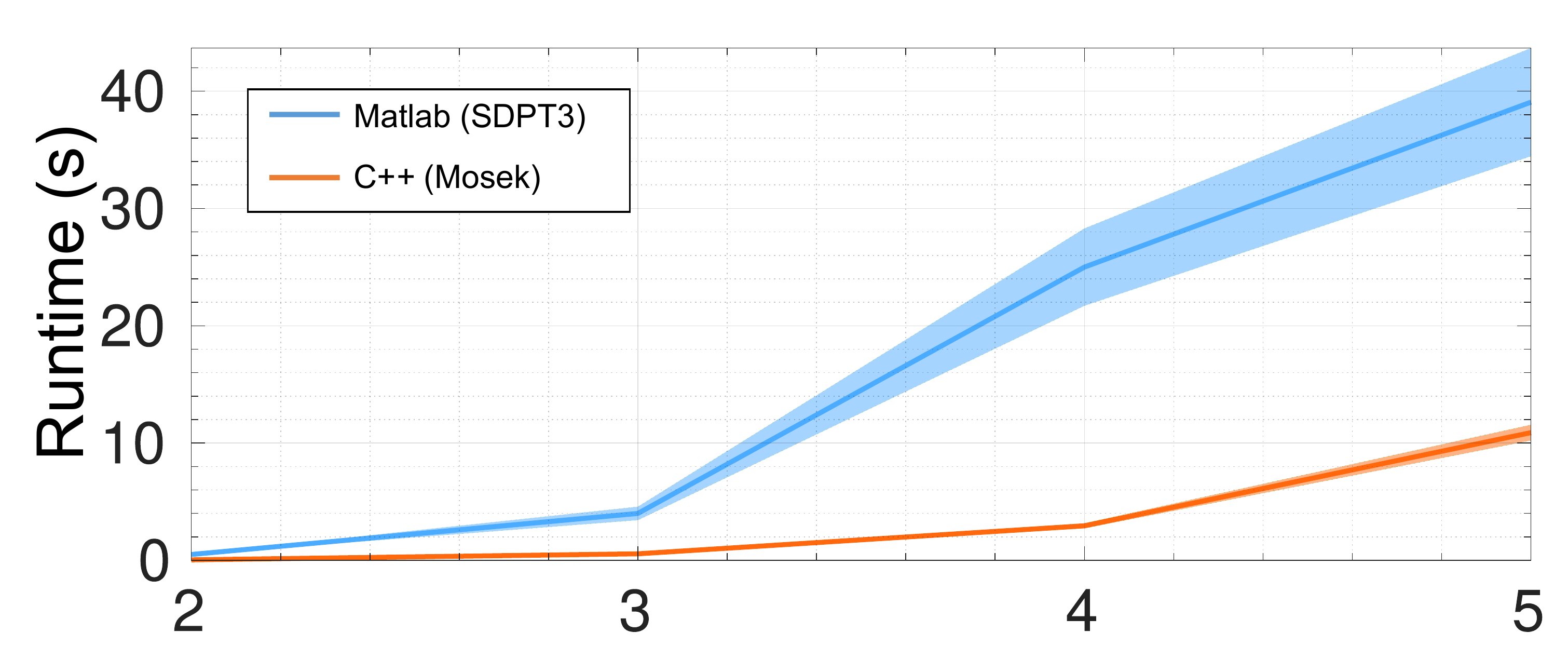}
 	\caption{\label{fig:multi_time} Runtime comparison using Matlab / C++ with different number of robots. (solid line: mean; shaded area: 1-sigma standard deviation)}
 \end{figure}
 

\subsection{Real-world Experiments}
\label{subsec:real-world}

\begin{figure}
	\centering
	\includegraphics[width=0.5\textwidth]{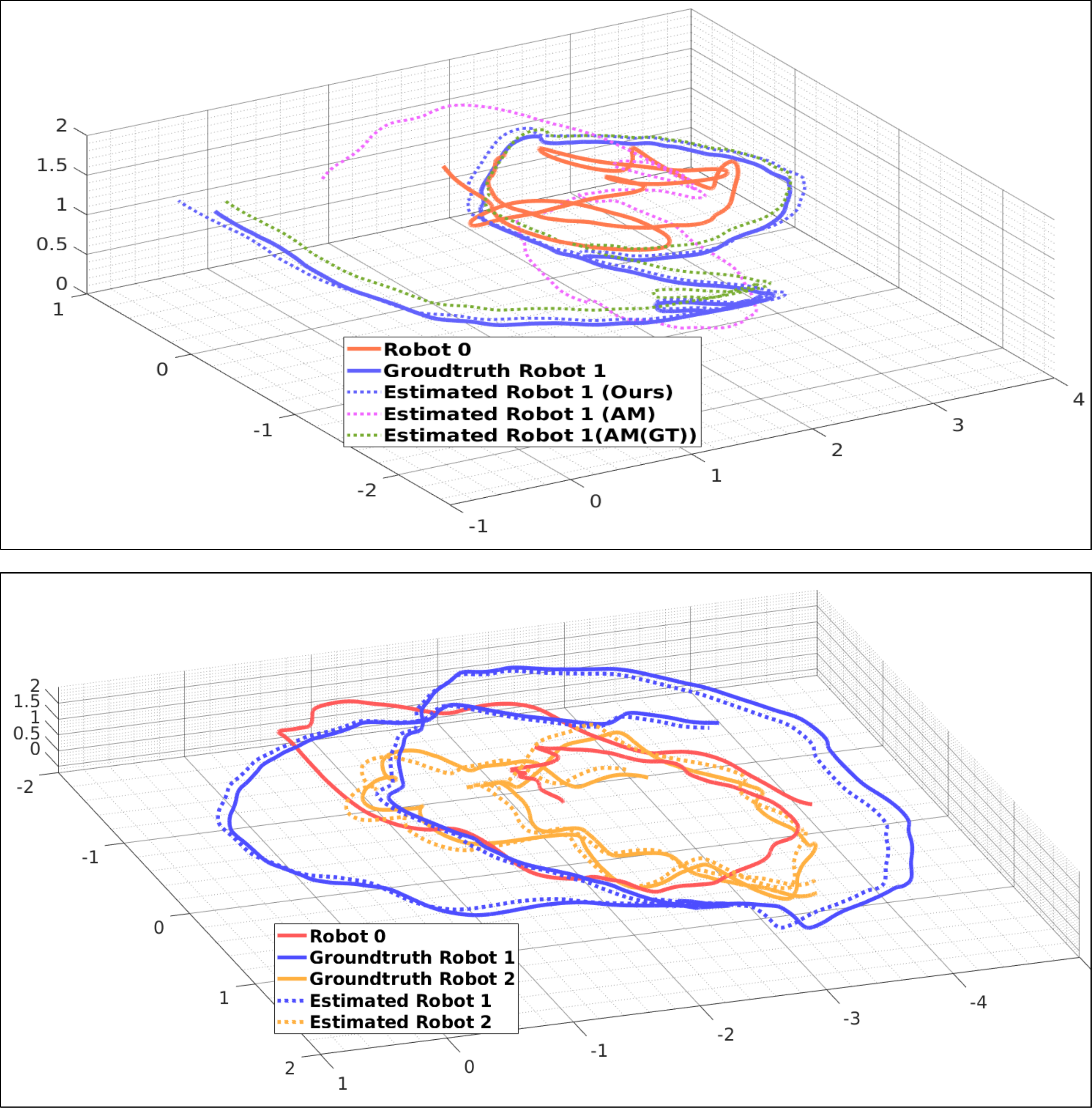}
	\caption{\label{fig:real_2} Estimated trajectories and ground truth in real-world experiments. Top: Two robots experiment with a observer (robot 0) and a observed robot (robot 1). Bottom: Three robots experiment with a observer robot (robot 0) and two observed robots (robot 1 and 2).}
\end{figure}


In real-world experiments, we use motion capture and VIO for odometry estimations and AprilTag for bearing measurements, with ground truth provided by vicon motion capture. Table \ref{tab:realworld} summarizes the results of comparison between our method and others. The experiments that use Vicon (300 measurements) is labeled "Vicon+AprilTag", and using VINS \cite{qin2017vins}(100 measurements) is labeled "VINS+AprilTag". 
Under each configuration, we conduct experiments with two and three robots. The ground truth correspondence comes from AprilTag. 
In experiments with three robots, AM (C+GT) optimizes with ground truth correspondence, while AM (w/o C+GT) does not. 
The results are in Fig.(\ref{fig:real_2}).

As Table \ref{tab:realworld} shows, AM and AM (w/o C+GT) all converge to small cost but obtain egregiously large $\text{L}^2$-norm error, which indicates that they are trapped in a local minimum. 
Compared with them, our globally optimal approach obtains the minimum cost in all experiments with most or secondly most small error. 
Note that, due to noise from odometry estimation and bearing measurements, obtaining the minimum cost does not mean obtaining the most accurate estimation. 
For runtime, the table shows that our algorithm has constant runtime which is independent to the number of measurements. 
It also indicates that our algorithm is suitable for bootstrapping other algorithms that use relative poses as initialization.

Finally, we apply our algorithm in multi-robot map fusion as Fig.\ref{fig:ORB} shows. 
In this experiment, each robot's local map comes from feature-based monocular SLAM. 
Compared with AM that traps into local minimum and fails to fuse maps, our result fuses robots' maps correctly without any initialization, while AM (GT) needs.

\begin{table}[]
	\label{tab:realworld}
	\caption{Real-world Experiments Results}
	\resizebox{1.01\columnwidth}{!}{
	\begin{tabular}{|c|c|c|c|cc|c|}
		\hline
		&                                     &                                   &                                        & \multicolumn{2}{c|}{\textbf{$\text{L}^2$ Error}}                                                                              &                                                                                   \\ \cline{5-6}
		\multirow{-2}{*}{\textbf{Scene}}                                             & \multirow{-2}{*}{\textbf{\#Robots}} & \multirow{-2}{*}{\textbf{Method}} & \multirow{-2}{*}{\textbf{Cost}}        & \multicolumn{1}{c|}{\begin{tabular}[c]{@{}c@{}}Trans. \\ (m)\end{tabular}} & Rot.                                  & \multirow{-2}{*}{\textbf{\begin{tabular}[c]{@{}c@{}}Runtime\\ (ms)\end{tabular}}} \\ \hline
		&                                     & Ours (D+R)                        & {\color[HTML]{FE0000} \textbf{0.0018}} & \multicolumn{1}{c|}{\textbf{0.24}}                                         & {\color[HTML]{330001} \textbf{0.063}} & \textbf{343.3}                                                                    \\ \cline{3-7} 
		&                                     & AM                                & 0.198                                  & \multicolumn{1}{c|}{2.52}                                                  & 2.83                                  & 1391.5                                                                            \\ \cline{3-7} 
		& \multirow{-3}{*}{2}                 & AM (GT)                            & {\color[HTML]{333333} 0.132}           & \multicolumn{1}{c|}{0.323}                                                 & 0.087                                 & 660.5                                                                             \\ \cline{2-7} 
		&                                     & Ours (D+R)                        & {\color[HTML]{FE0000} \textbf{0.0006}} & \multicolumn{1}{c|}{{\color[HTML]{000000} 0.092}}                          & {\color[HTML]{000000} 0.0688}         & \textbf{603.3}                                                                    \\ \cline{3-7} 
		&                                     & AM (C)                            & 0.727                                  & \multicolumn{1}{c|}{2.506}                                                 & 2.809                                 & 1289.1                                                                            \\ \cline{3-7} 
		&                                     & AM (C+GT)                         & {\color[HTML]{000000} 0.082}           & \multicolumn{1}{c|}{\textbf{0.0305}}                                       & \textbf{0.0305}                       & 632.7                                                                             \\ \cline{3-7} 
		\multirow{-7}{*}{\begin{tabular}[c]{@{}c@{}}VICON+\\ AprilTag\end{tabular}} & \multirow{-4}{*}{3}                 & AM (w/o C+GT)                      & 0.569                                  & \multicolumn{1}{c|}{1.53}                                                  & 0.290                                 & 901.5                                                                             \\ \hline
		&                                     & Ours (D+R)                        & {\color[HTML]{FE0000} \textbf{0.553}}  & \multicolumn{1}{c|}{\textbf{0.429}}                                        & \textbf{0.0716}                       & 180.1                                                                             \\ \cline{3-7} 
		&                                     & AM                                & 0.667                                  & \multicolumn{1}{c|}{2.597}                                                 & 2.823                                 & 312.3                                                                             \\ \cline{3-7} 
		& \multirow{-3}{*}{2}                 & AM (GT)                            & {\color[HTML]{000000} 0.650}           & \multicolumn{1}{c|}{{\color[HTML]{000000} 0.601}}                          & {\color[HTML]{000000} 0.159}          & \textbf{120.2}                                                                    \\ \cline{2-7} 
		&                                     & Ours (D+R)                        & {\color[HTML]{FE0000} \textbf{0.101}}  & \multicolumn{1}{c|}{{\color[HTML]{000000} 0.943}}                          & {\color[HTML]{000000} 0.205}          & 603.3                                                                             \\ \cline{3-7} 
		&                                     & AM (C)                            & 0.423                                  & \multicolumn{1}{c|}{1.62}                                                  & 2.82                                  & 512.3                                                                             \\ \cline{3-7} 
		&                                     & AM (C+GT)                         & {\color[HTML]{000000} 0.337}           & \multicolumn{1}{c|}{\textbf{0.773}}                                        & \textbf{0.118}                        & \textbf{131.1}                                                                    \\ \cline{3-7} 
		\multirow{-7}{*}{\begin{tabular}[c]{@{}c@{}}VINS+\\ AprilTag\end{tabular}}  & \multirow{-4}{*}{3}                 & AM (w/o C+GT)                      & 1.308                                  & \multicolumn{1}{c|}{7.24}                                                  & 2.82                                  & 305.6                                                                             \\ \hline
	\end{tabular}
}
\end{table}

\section{Conclusions and Future Work}
\label{sec:conclusion}
In this paper, we proposed a certifiably globally optimal algorithm for mutual localization problems with anonymous bearing measurements. With our method, we can obtain bearing-pose correspondences and relative poses between robots together. Furthermore, we provide a necessary condition for optimality guarantee and conduct extensive experiments to present the optimality and robustness compared with local optimization methods. In the future, we aim to explore the noise tolerance threshold of our method to provide a more powerful guarantee for application.

\bibliography{ICRA}

\begin{thebibliography}{10}
\providecommand{\url}[1]{#1}
\csname url@rmstyle\endcsname
\providecommand{\newblock}{\relax}
\providecommand{\bibinfo}[2]{#2}
\providecommand\BIBentrySTDinterwordspacing{\spaceskip=0pt\relax}
\providecommand\BIBentryALTinterwordstretchfactor{4}
\providecommand\BIBentryALTinterwordspacing{\spaceskip=\fontdimen2\font plus
\BIBentryALTinterwordstretchfactor\fontdimen3\font minus
  \fontdimen4\font\relax}
\providecommand\BIBforeignlanguage[2]{{%
\expandafter\ifx\csname l@#1\endcsname\relax
\typeout{** WARNING: IEEEtran.bst: No hyphenation pattern has been}%
\typeout{** loaded for the language `#1'. Using the pattern for}%
\typeout{** the default language instead.}%
\else
\language=\csname l@#1\endcsname
\fi
#2}}

\bibitem{quan2021distributed}
L.~Quan, L.~Yin, C.~Xu, and F.~Gao, ``Distributed swarm trajectory optimization
  for formation flight in dense environments,'' \emph{arXiv preprint
  arXiv:2109.07682}, 2021.

\bibitem{gao2021meeting}
Y.~Gao, Y.~Wang, X.~Zhong, T.~Yang, M.~Wang, Z.~Xu, Y.~Wang, C.~Xu, and F.~Gao,
  ``Meeting-merging-mission: A multi-robot coordinate framework for large-scale
  communication-limited exploration,'' \emph{arXiv preprint arXiv:2109.07764},
  2021.

\bibitem{riazuelo2014c2tam}
L.~Riazuelo, J.~Civera, and J.~M. Montiel, ``C2tam: A cloud framework for
  cooperative tracking and mapping,'' \emph{Robotics and Autonomous Systems},
  vol.~62, no.~4, pp. 401--413, 2014.

\bibitem{schmuck2019ccm}
P.~Schmuck and M.~Chli, ``Ccm-slam: Robust and efficient centralized
  collaborative monocular simultaneous localization and mapping for robotic
  teams,'' \emph{Journal of Field Robotics}, vol.~36, no.~4, pp. 763--781,
  2019.

\bibitem{cieslewski2018data}
T.~Cieslewski, S.~Choudhary, and D.~Scaramuzza, ``Data-efficient decentralized
  visual slam,'' in \emph{2018 IEEE international conference on robotics and
  automation (ICRA)}.

\bibitem{lajoie2020door}
P.-Y. Lajoie, B.~Ramtoula, Y.~Chang, L.~Carlone, and G.~Beltrame, ``Door-slam:
  Distributed, online, and outlier resilient slam for robotic teams,''
  \emph{IEEE Robotics and Automation Letters}, vol.~5, no.~2, pp. 1656--1663,
  2020.

\bibitem{arandjelovic2016netvlad}
R.~Arandjelovic, P.~Gronat, A.~Torii, T.~Pajdla, and J.~Sivic, ``Netvlad: Cnn
  architecture for weakly supervised place recognition,'' in \emph{Proceedings
  of the IEEE conference on computer vision and pattern recognition}, 2016, pp.
  5297--5307.

\bibitem{zhou2006multi}
X.~S. Zhou and S.~I. Roumeliotis, ``Multi-robot slam with unknown initial
  correspondence: The robot rendezvous case,'' in \emph{2006 IEEE/RSJ
  international conference on intelligent robots and systems}.

\bibitem{martinelli2005multi}
A.~Martinelli, F.~Pont, and R.~Siegwart, ``Multi-robot localization using
  relative observations,'' in \emph{2005 IEEE International Conference on
  Robotics and Automation}.

\bibitem{chang2011vision}
C.-H. Chang, S.-C. Wang, and C.-C. Wang, ``Vision-based cooperative
  simultaneous localization and tracking,'' in \emph{2011 IEEE International
  Conference on Robotics and Automation}.\hskip 1em plus 0.5em minus
  0.4em\relax IEEE, 2011, pp. 5191--5197.

\bibitem{zhou2012determining}
X.~S. Zhou and S.~I. Roumeliotis, ``Determining 3-d relative transformations
  for any combination of range and bearing measurements,'' \emph{IEEE
  Transactions on Robotics}, vol.~29, no.~2, pp. 458--474, 2012.

\bibitem{cognetti20123}
M.~Cognetti, P.~Stegagno, A.~Franchi, G.~Oriolo, and H.~H. B{\"u}lthoff, ``3-d
  mutual localization with anonymous bearing measurements,'' in \emph{2012 IEEE
  International Conference on Robotics and Automation}.

\bibitem{franchi2013mutual}
A.~Franchi, G.~Oriolo, and P.~Stegagno, ``Mutual localization in multi-robot
  systems using anonymous relative measurements,'' \emph{The International
  Journal of Robotics Research}, vol.~32, no.~11, pp. 1302--1322, 2013.

\bibitem{indelman2014multi}
V.~Indelman, E.~Nelson, N.~Michael, and F.~Dellaert, ``Multi-robot pose graph
  localization and data association from unknown initial relative poses via
  expectation maximization,'' in \emph{2014 IEEE International Conference on
  Robotics and Automation}.

\bibitem{dong2015distributed}
J.~Dong, E.~Nelson, V.~Indelman, N.~Michael, and F.~Dellaert, ``Distributed
  real-time cooperative localization and mapping using an uncertainty-aware
  expectation maximization approach,'' in \emph{2015 IEEE International
  Conference on Robotics and Automation}.

\bibitem{nguyen2020vision}
T.~Nguyen, K.~Mohta, C.~J. Taylor, and V.~Kumar, ``Vision-based multi-mav
  localization with anonymous relative measurements using coupled probabilistic
  data association filter,'' in \emph{2020 IEEE International Conference on
  Robotics and Automation}.

\bibitem{jang2021multirobot}
Y.~Jang, C.~Oh, Y.~Lee, and H.~J. Kim, ``Multirobot collaborative monocular
  slam utilizing rendezvous,'' \emph{IEEE Transactions on Robotics}, vol.~37,
  no.~5, pp. 1469--1486, 2021.

\bibitem{carlone2015lagrangian}
L.~Carlone, D.~M. Rosen, G.~Calafiore, J.~J. Leonard, and F.~Dellaert,
  ``Lagrangian duality in 3d slam: Verification techniques and optimal
  solutions,'' in \emph{2015 IEEE/RSJ International Conference on Intelligent
  Robots and Systems}.

\bibitem{rosen2019se}
D.~M. Rosen, L.~Carlone, A.~S. Bandeira, and J.~J. Leonard, ``Se-sync: A
  certifiably correct algorithm for synchronization over the special euclidean
  group,'' \emph{The International Journal of Robotics Research}, vol.~38, no.
  2-3, pp. 95--125, 2019.

\bibitem{briales2017cartan}
J.~Briales and J.~Gonzalez-Jimenez, ``Cartan-sync: Fast and global se
  (d)-synchronization,'' \emph{IEEE Robotics and Automation Letters}, vol.~2,
  no.~4, pp. 2127--2134, 2017.

\bibitem{yang2019quaternion}
H.~Yang and L.~Carlone, ``A quaternion-based certifiably optimal solution to
  the wahba problem with outliers,'' in \emph{Proceedings of the IEEE/CVF
  International Conference on Computer Vision}, 2019, pp. 1665--1674.

\bibitem{briales2017convex}
J.~Briales and J.~Gonzalez-Jimenez, ``Convex global 3d registration with
  lagrangian duality,'' in \emph{Proceedings of the IEEE Conference on Computer
  Vision and Pattern Recognition}, 2017, pp. 4960--4969.

\bibitem{briales2018certifiably}
J.~Briales, L.~Kneip, and J.~Gonzalez-Jimenez, ``A certifiably globally optimal
  solution to the non-minimal relative pose problem,'' in \emph{Proceedings of
  the IEEE Conference on Computer Vision and Pattern Recognition}, 2018, pp.
  145--154.

\bibitem{zhao2020efficient}
J.~Zhao, ``An efficient solution to non-minimal case essential matrix
  estimation,'' \emph{IEEE Transactions on Pattern Analysis and Machine
  Intelligence}, 2020.

\bibitem{giamou2019certifiably}
M.~Giamou, Z.~Ma, V.~Peretroukhin, and J.~Kelly, ``Certifiably globally optimal
  extrinsic calibration from per-sensor egomotion,'' \emph{IEEE Robotics and
  Automation Letters}, vol.~4, no.~2, pp. 367--374, 2019.

\bibitem{cifuentes2021local}
D.~Cifuentes, S.~Agarwal, P.~A. Parrilo, and R.~R. Thomas, ``On the local
  stability of semidefinite relaxations,'' \emph{Mathematical Programming}, pp.
  1--35, 2021.

\bibitem{qin2017vins}
T.~Qin, P.~Li, and S.~Shen, ``Vins-mono: A robust and versatile monocular
  visual-inertial state estimator,'' \emph{IEEE Transactions on Robotics},
  vol.~34, no.~4, pp. 1004--1020, 2018.

\end{thebibliography}
\end{document}